\def\CC{\mathcal{C}}
\def\EE{\mathcal{E}}
\def\II{\mathcal{I}}
\def\KK{\mathcal{K}}
\def\MM{\mathcal{M}}\def\NN{\mathcal{N}}
\def\TT{\mathcal{T}}
\def\Bb{\mathbf{B}}
\def\Gb{\mathbf{G}}
\def\Mb{\mathbf{M}}
\def\ab{\mathbf{a}}
\def\fb{\mathbf{f}}
\def\gb{\mathbf{g}}
\def\ub{\mathbf{u}}
\def\xb{\mathbf{x}}
\def\Rbb{\mathbb{R}}
\def\ett{\mathtt{e}}
\def\ltt{\mathtt{l}}
\def\rtt{\mathtt{r}}
\def\utt{\mathtt{u}}
\def\vtt{\mathtt{v}}
\def\R{\Rbb}
\def\t{\top}
\def\*{\star}
\newcommand{\q}{\mathbf{q}}
\newcommand{\qd}{{\dot{\q}}}
\newcommand{\qdd}{{\ddot{\q}}}
\newcommand{\x}{\mathbf{x}}
\newcommand{\xd}{{\dot{\x}}}
\newcommand{\xdd}{{\ddot{\x}}}
\newcommand{\y}{\mathbf{y}}
\newcommand{\yd}{{\dot{\y}}}
\newcommand{\z}{\mathbf{z}}
\newcommand{\zd}{{\dot{\z}}}
\newcommand{\zdd}{{\ddot{\z}}}
\newcommand{\f}{\mathbf{f}}
\newcommand{\g}{\mathbf{g}}
\newcommand{\J}{\mathbf{J}}
\newcommand{\Jd}{{\dot{\J}}}
\newcommand{\B}{\mathbf{B}}
\newcommand{\G}{\mathbf{G}}
\newcommand{\I}{\mathbf{I}}
\newcommand{\M}{\mathbf{M}}
\newcommand{\sdot}[2]{\overset{\lower0.1em\hbox{$\scriptscriptstyle #2$}}{#1}}
\def\flow{RMP{flow}\xspace}
\def\algebra{RMP-algebra\xspace}
\def\tree{RMP-tree\xspace}
\def\forest{RMP-forest\xspace}
\def\pushforward{\texttt{pushforward}\xspace}
\def\pullback{\texttt{pullback}\xspace}
\def\resolve{\texttt{resolve}\xspace}
\newenvironment{acknowledgment}{%
      \list{}{}\item[\hskip\labelsep\bfseries Acknowledgments]}
    {\endlist}
\begin{document}

\mainmatter              

\title{Multi-Objective Policy Generation \\ for
Multi-Robot Systems \\ Using
Riemannian Motion Policies}

\titlerunning{Multi-Objective Policy Generation for Multi-Robot Systems Using RMPs}


\author{
Anqi~Li
\and
Mustafa~Mukadam
\and
Magnus~Egerstedt
\and
Byron~Boots
}

\institute{
    Georgia Institute of Technology,
    Atlanta, Georgia 30332, USA\\
    \email{\{anqi.li, mhmukadam, magnus\}@gatech.edu, bboots@cc.gatech.edu}
}

\authorrunning{Li, Mukadam, Egerstedt, Boots}

\maketitle

\vspace{-4mm}
\begin{abstract}
In many applications, multi-robot systems are required to achieve multiple objectives. For these multi-objective tasks, it is oftentimes hard to design a single control policy that fulfills all the objectives simultaneously.
In this paper, we focus on  multi-objective tasks that can be decomposed into a set of simple subtasks. Controllers for these subtasks are individually-designed and then combined into a control policy for the entire team.
One significant feature of our work is that the subtask controllers are designed along with their underlying manifolds. When a controller is combined with other controllers, their associated manifolds are also taken into account. This formulation yields a policy generation framework for multi-robot systems that can combine controllers for a variety of objectives while implicitly handling the interaction among robots and subtasks. To describe controllers on manifolds, we adopt Riemannian Motion Policies (RMPs), and propose a collection of RMPs for common multi-robot subtasks. Centralized and decentralized algorithms are designed to combine these RMPs into a final control policy. 
Theoretical analysis shows that the system under the control policy is stable. Moreover, we prove that many existing multi-robot controllers can be closely approximated by the framework. The proposed algorithms are validated through both simulated tasks and robotic implementations.

\keywords{Multi-Robot Systems, Motion Planning and Control}
\end{abstract}

\vspace{-8mm}
\section{Introduction}
\vspace{-2mm}

Multi-robot control policies are often designed through performing gradient descent on a potential function that encodes a \emph{single} team-level objective, e.g. forming a certain shape, covering an area of interest, or meeting at a common location~\cite{bullo2009distributed,mesbahi2010graph,cortes2017coordinated}. However, many problems involve a diverse set of objectives that the robotic team needs to fulfill simultaneously. For example, collision avoidance and connectivity maintenance are often required in addition to any primary tasks~\cite{wang2016multi}. One possible solution is to encode the multi-objective problem as a single motion planning problem with various constraints~\cite{desaraju2012decentralized,wagner2015subdimensional,luo2016distributed,swaminathan2015planning}.
However, as more objectives and robots are considered, it can be difficult to directly search for a solution that can achieve \emph{all} of the objectives simultaneously. 

An alternative strategy is to design a controller for each individual objective and then \emph{combine} these controllers into a single control policy. Different schemes for combining controllers have been investigated in the multi-robot systems literature. For example, one standard treatment for inter-robot collision avoidance is to let the collision avoidance controller take over the operation if there is a risk of collision~\cite{arkin1998behavior}. 
A fundamental challenge for such construction is that unexpected interaction between individual controllers can yield the overall system unstable~\cite{ames2014control,wang2017safety}.  Null-space-based behavioral control~\cite{bishop2003use,antonelli2010flocking} 
forces low priority controllers to not interfere with high priority controllers. However, when there are a large number of objectives, the system may not have the sufficient degrees of freedom to consider all the objectives simultaneously.
Another example is the potential field method, which formulates the overall controller as a weighted sum of controllers for each objective~\cite{arkin1998behavior,KhatibPotentialFields1985}. While the system is guaranteed to be stable when the weights are constant, careful tuning of these constant weights are required to produce desirable behaviors.

Methods based on Control Lyapunov Functions (CLFs) and Control Barrier Functions (CBFs)~\cite{wang2016multi,ames2014control,wang2017safety} seek to optimize primary task-level objectives while
formulating secondary objectives, such as collision avoidance and connectivity maintenance, as CLF or CBF constraints and solve via quadratic programming (QP). While this provides a computational framework for general multi-objective multi-robot tasks, solving the QP often requires centralized computation and can be computationally demanding if the number of robots or constraints is large. Although the decentralized safety barrier certificate~\cite{wang2017safety} is a notable exception, it only considers inter-robot collision avoidance and it has not been demonstrated how the same decentralized construction can be applicable to other objectives.

In this paper, we return to the idea of combining controllers and rethink how an objective and its corresponding controller are defined: instead of defining objectives directly on the configuration space,
we define them on non-Euclidean \emph{manifolds}, which can be lower-dimensional than the configuration space. When combining individually-designed controllers, we consider the outputs of the controllers and their underlying manifolds. In particular, we adopt Riemannian Motion Policies (RMPs)~\cite{ratliff2018riemannian}, a class of manifold-oriented control policies that has been successfully applied to robot manipulators, and \flow~\cite{cheng2018rmpflow}, the computational framework for combining RMPs. This framework, where each controller is associated with a matrix-value and state-dependent weight, can be considered as an extension to the potential field method. This extension leads to new geometric insight on designing controllers and more freedom to combine them.
While the \flow algorithm is centralized, we provide a decentralized version and establish the stability analysis for the decentralized framework.

There are several major advantages to defining objectives and controllers on manifolds for multi-robot systems.
First, this formulation provides a general formula for the construction of controllers: the key step is to design the manifold for each substask, as controllers/desired behaviors can be viewed as a natural outcome of their associated manifolds. For example, obstacle avoidance behavior is closely related to the geodesic flow in a manifold where the obstacles manifest as holes in the space. Second, since we design controllers in the manifolds most relevant to their objectives, these manifolds are usually of lower dimension than the configuration space. When properly combined with other controllers, this can provide additional degrees of freedom that help controllers avoid unnecessary conflicts. This is particularly important for multi-robot systems where a large number of controllers interact with one another in a complicated way.
Third, it is shown in~\cite{cheng2018rmpflow} that Riemannian metrics on manifolds naturally provide a notion of importance that enables the stable combination of controllers.
Finally, \flow is coordinate-free~\cite{cheng2018rmpflow}, which allows the proposed framework to be directly generalized to heterogeneous multi-robot teams.

We present four contributions in this paper.
First, we present a centralized solution to combine controllers for solving multi-robot tasks based on \flow.
Second, we design a collection of RMPs for simple and common multi-robot subtasks that can be combined to achieve more complicated tasks. Third, we draw a connection between some of the proposed RMPs and a large group of existing multi-robot distributed controllers.
Finally, we introduce a decentralized extension to \flow, with its application to multi-robot systems,
and establish the stability analysis for this decentralized framework.


\vspace{-2mm}
\section{Riemannian Motion Policies (RMPs)}\label{sec:rmp}
\vspace{-2mm}

We briefly review Riemannian Motion Policies (RMPs)~\cite{ratliff2018riemannian}, a mathematical representation of policies on manifolds, and \flow~\cite{cheng2018rmpflow}, a recursive algorithm to combine RMPs. In this section, we start with RMPs and \flow for single robots, for which these concepts are initially defined~\cite{ratliff2018riemannian,cheng2018rmpflow}. We will later on consider them in the context of multi-robot systems in subsequent sections.

Consider a robot (or a group of robots in later sections) with its configuration space $\CC$ being a smooth $d$-dimensional manifold. For the sake of simplicity, we assume that $\CC$ admits a global\footnote{In the case when $\CC$ does not admit a global coordinate, a similar construction can be done locally on a subset of the configuration space $\CC$.
} \emph{generalized coordinate} $\q: \CC \to \R^d$. As is the case in \cite{cheng2018rmpflow}, we assume that the system can be feedback linearized in such a way that it is controlled directly through the generalized acceleration, $\qdd = \ub(\q, \qd)$. We call $\ub$ a \textit{policy} or a \textit{controller}, and
$(\q, \qd)$  the \emph{state}.

\flow~\cite{cheng2018rmpflow} assumes that the task is composed of a set of \emph{subtasks}, for example, avoiding collision with an obstacle, reaching a goal, tracking a trajectory, etc. In this case, the task space, denoted $\TT$, becomes a collection of multiple \emph{subtask spaces}, each corresponding to a subtask.
We assume that the task space $\TT$ is related to the configuration space $\CC$ through a smooth \emph{task map} $\psi: \CC \to \TT$.
The goal of \flow~\cite{cheng2018rmpflow} is to generate policy $\ub$ in the configuration space $\CC$ so that the trajectory exhibits desired behaviors on the task space $\TT$.

\vspace{-3mm}
\subsection{Riemannian Motion Policies}

Riemannian Motion Policies (RMPs)~\cite{ratliff2018riemannian} represent policies on manifolds. Consider an $m$-dimensional manifold $\MM$ with generalized coordinate $\x \in \R^m$. 
An RMP on $\MM$ can be represented by two forms, its \emph{canonical form} and its \emph{natural form}. The \emph{canonical form} of an RMP is a pair $(\ab, \M)^\MM$, where 
$\ab: (\x,\xd)\mapsto \ab(\x,\xd)\in\R^m$ is the desired acceleration, i.e. control input, and 
$\M: (\x,\xd)\mapsto \M(\x,\xd)\in\R_+^{m\times m}$ is the inertial matrix which defines the importance of the RMP when combined with other RMPs. Given its canonical form, the \emph{natural form} of an RMP is the pair $[\f, \M]^\MM$, where $\f = \M\,\ab$ is the desired force. The natural forms of RMPs are introduced mainly for computational convenience.

RMPs on a manifold $\MM$ can be naturally (but not necessarily) generated from a class of systems called \emph{Geometric Dynamical Systems}~(GDSs)~\cite{cheng2018rmpflow}. GDSs are a generalization of the widely studied classical \emph{Simple Mechanical Systems} (SMSs)~\cite{bullo2004geometric}. In GDSs, the kinetic energy metric, $\Gb$, is a function of both the configuration and velocity, 
i.e. $\Gb(\x,\xd)\in\R^{m\times m}_{+}$. This allows kinetic energy to be dependent on the direction of motion, which can be useful in applications such as obstacle avoidance~\cite{ratliff2018riemannian,cheng2018rmpflow}. The dynamics of GDSs are in the form of \vspace{-1mm}
\begin{equation}\label{eq:GDS}
 \left(\Gb(\x,\xd) + \bm\Xi_{\G}(\x,\xd)\right)\,\xdd
+ \bm\xi_{\G}(\x,\xd)  = - \nabla_\x \Phi(\x) - \Bb(\x,\xd)\,\xd,
\end{equation}
where we call $\B: \R^m \times \R^m \to \R^{m\times m}_{+}$ the \emph{damping matrix} and $\Phi: \R^m \to \R$ the \emph{potential function}. \mbox{The curvature terms $\bm\Xi_{\Gb}$ and $\bm\xi_{\Gb}$ are induced by metric $\Gb$, }\vspace{-1mm}
\begin{equation}\label{eq:curvatures}
    \begin{split}
    \bm\Xi_{\G}(\x,\xd)&\,\coloneqq\,\frac{1}{2} \sum_{i=1}^m \, \dot{x}_i\,\partial_{\xd}\, \gb_{i}(\x,\xd),\\
    \bm\xi_{\G}(\x,\xd)&\,\coloneqq\,\sdot{\Gb}{\x}(\x,\xd)\,\xd - \frac{1}{2} \nabla_\x\, (\xd^\t \Gb(\x,\xd)\, \xd),
    \end{split}
\end{equation}
with $\sdot{\Gb}{\xb}(\x,\xd) \coloneqq  [\partial_{\x} \, \gb_{i} (\x,\xd)\, \xd]_{i=1}^m$, $\gb_{i}$ denoting the $i$th column of $\Gb$, $x_i$ denoting the $i$th component of $\x$, and $[\cdot]$ denoting matrix composition through horizontal concatenation of vectors. 
Given a GDS~\eqref{eq:GDS}, there is an RMP $(\ab, \M)^\MM$ naturally associated with it given by $\ab=\xdd$ and $\M(\x,\xd)=\left(\Gb(\x,\xd) + \bm\Xi_{\G}(\x,\xd)\right)$. Therefore, the velocity dependent metric $\Gb$ provides velocity dependent importance weight $\Mb$ when combined with other RMPs.

\vspace{-3mm}
\subsection{\flow}

\flow~\cite{cheng2018rmpflow} is an algorithm to generate control policies on the configuration space given the RMPs for all subtasks, for example, collision avoidance with a particular obstacle, reaching a goal, etc. Given the state information of the robot in the configuration space and a set of individually-designed controllers (RMPs) for the subtasks, \flow produces the control input on the configuration space through combining these controllers.

\flow introduces: i) a data structure, the \emph{\tree}, to describe the structure of the task map $\psi$, and ii) a set of operators, the \emph{\algebra}, to propagate information across the \tree. An \tree is a directed tree. Each node $\utt$ in the \tree is associated with a state $(\x,\xd)$ defined over a manifold ${\MM}$ together with an RMP $(\fb_{\utt}, \M_{\utt})^{\MM}$.
Each edge $\ett$ in the \tree is augmented with a smooth map from the parent node to the child node, denoted as $\psi_{\ett}$. An example \tree is shown in Fig.~\ref{fig:tree}. The root node of the \tree $\rtt$ is associated with the state of the robot $(\q,\qd)$ and its control policy on the configuration space $(\fb_\rtt,\M_\rtt)^\CC$. Each leaf node $\ltt_k$ corresponds to a subtask with its control policy given by an RMP $(\fb_{\ltt_k},\M_{\ltt_k})^{\TT_{k}}$, where $\TT_{k}$ is a subtask space. 

\begin{figure}
    \centering
    \vspace{-2mm}
    \resizebox{!}{1.3in}{\includegraphics{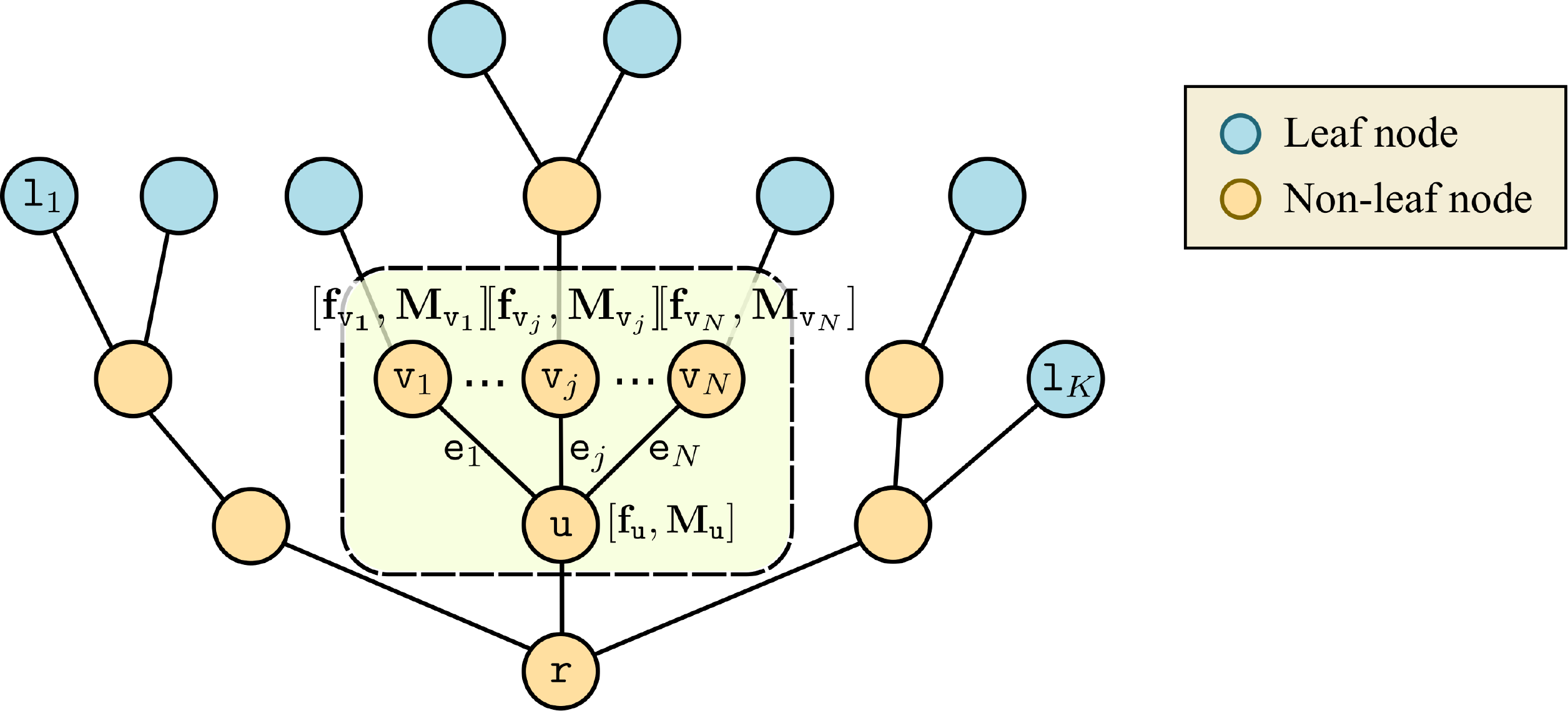}}
    \vspace{-2mm}
    \caption{An example of an \tree. See text for details.
    }
    \label{fig:tree}
    \vspace{-6mm}
\end{figure}

The \algebra consists of three operators: \pushforward, \pullback and \resolve. To illustrate how they operate, consider a node $\utt$ with $N$ child nodes, denoted as $\{\vtt_j \}_{j=1}^N$. Let $\{\ett_j \}_{j=1}^N$ be the edges from $\utt$ to the child nodes (Fig.~\ref{fig:tree}). Suppose that $\utt$ is associated with the manifold $\MM$, while each child node $\vtt_j$ is associated with the manifold $\NN_j$.
The \algebra works as follows: \vspace{-1mm}
\begin{enumerate}
\item The \pushforward operator forward propagates the \emph{state} from the parent node $\utt$ to its child nodes $\{\vtt_j \}_{j=1}^N$. Given the state $(\x,\xd)$ associated with $\utt$, the \pushforward operator at $\vtt_j$ computes its associated state as $(\y_j, \yd_j) = (\psi_{\ett_j}(\x) , \J_{\ett_j} (\x)\,\xd )$, where $\J_{\ett_j} = \partial_\x \psi_{\ett_j}$ is the Jacobian matrix of the map $\psi_{\ett_j}$.

\item The \pullback operator combines the RMPs from the child nodes $\{\vtt_j \}_{j=1}^N$ to obtain the RMP associated with the parent node $\utt$. Given the RMPs from the child nodes, $\{[\f_{\vtt_j}, \M_{\vtt_j} ]^{\NN_j}\}_{j=1}^N$, the RMP associated with node $\utt$, $[\f_\utt,\M_\utt]^{\MM}$, is computed by the \pullback operator as, \vspace{-2mm}
\begin{equation*}\label{eq:natural-pullback} 
\f_\utt = \sum_{j=1}^N \J_{\ett_j}^\t (\f_{\vtt_j} - \M_{\vtt_j} \Jd_{\ett_j} \xd),\qquad \M_\utt = \sum_{j=1}^N \J_{\ett_j}^\t \M_{\vtt_j} \J_{\ett_j}.\vspace{-2mm}
\end{equation*}


\item The \resolve operator maps a natural-formed RMP to its canonical form. Given the natural-formed RMP $[\f_\utt, \M_\utt]^{\MM}$, the operator produces $(\ab_\utt, \M_\utt)^{\MM}$ with $\ab_\utt = \M^{\dagger}\,\f_\utt$, where $\dagger$ denotes Moore-Penrose inverse.
\end{enumerate}

With the \tree specified, \flow can perform control policy generation through the following process.
First, \flow performs a forward pass: it recursively calls \pushforward from the root node to the leaf nodes to update the state information associated with each node in the \tree. Second, every leaf node $\ltt_k$ \emph{evaluates} its corresponding natural-formed RMP $\{[\fb_{\ltt_k},\M_{\ltt_k}]^{\TT_{k}}\}$, possibly given by a GDS. Next, \flow performs a backward pass: it recursively calls \pullback from the leaf nodes to the root node to back propagate the RMPs in the natural form. After that, \flow calls \resolve at the root node to transform the RMP $[\f_{\rtt}, \M_{\rtt}]^\CC$ into its canonical form $(\ab_{\rtt}, \M_{\rtt})^\CC$. Finally, the robot executes the control policy by setting $\qdd = \ub=\ab_{\rtt}$.

\vspace{-2mm}
\subsection{Stability Properties of \flow}



To establish the stability results of \flow, we assume that every leaf node is associated with a GDS. Before stating the stability theorem, we need to define the metric, damping matrix, and potential function for a \emph{node} in the \tree.
\begin{definition}
If a node is a leaf, its metric, damping matrix and potential function are defined as in its associated GDS~\eqref{eq:GDS}. Otherwise, 
let $\{\vtt_j\}_{j=1}^N$ and $\{\ett_j\}_{j=1}^N$ denote the set of all child nodes of $\utt$ and associated edges, respectively. Suppose that $G_{\vtt_j}$, $B_{\vtt_j}$ and $\Phi_{\vtt_j}$ are the metric, damping matrix, and potential function for the child node $\vtt_j$. Then, the metric $\Gb_\utt$, damping matrix $\Bb_\utt$ and potential function $\Phi_\utt$ for the node $\utt$ are defined as,\vspace{-2mm}
\begin{equation}\label{eq:mdp}
        \Gb_\utt = \sum_{j=1}^N\J_{\ett_j}^\t\G_{\vtt_j}\J_{\ett_j},\quad
        \B_\utt = \sum_{j=1}^N\J_{\ett_j}^\t \B_{\vtt_j} \J_{\ett_j},\quad
        \Phi_\utt =  \sum_{j=1}^N\Phi_{\vtt_j} \circ \psi_{\ett_j},\vspace{-2mm}
\end{equation}
where $\circ$ denotes function composition.
\end{definition}

The stability results of \flow are stated in the following theorem.\vspace{-0mm}
\begin{theorem}[Cheng \emph{et al.} \cite{cheng2018rmpflow}]\label{thm:stability}
Let $\G_{\rtt}$, $\B_{\rtt}$, and $\Phi_{\rtt}$ be the metric, damping matrix, and potential function of the root node defined in (\ref{eq:mdp}). If $\G_{\rtt}, \B_{\rtt} \succ 0 $,  and $\M_{\rtt}=(\G_{\rtt} + \bm\Xi_{\Gb_\rtt})$ is nonsingular, the system converges to a forward invariant set $\CC_\infty \coloneqq \{(\q,\qd) : \nabla_\q \Phi_{\rtt} = 0, \qd = 0 \}$.
\end{theorem}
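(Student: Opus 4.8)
The plan is to exhibit a Lyapunov function for the closed-loop system and invoke LaSalle's invariance principle. The natural candidate is the total energy of the root GDS,
\[
V(\q,\qd) \;=\; \tfrac{1}{2}\,\qd^\t \Gb_{\rtt}(\q,\qd)\,\qd \;+\; \Phi_{\rtt}(\q),
\]
which is well-defined and bounded below once $\Gb_\rtt \succ 0$ and $\Phi_\rtt$ is bounded below (the latter can be assumed, or one works on a sublevel set). The first step is a \emph{structural} lemma: the policy produced by \flow at the root, $\qdd = \Mb_\rtt^\dagger \f_\rtt$, coincides with the solution of a single GDS on $\CC$ with metric $\Gb_\rtt$, damping $\Bb_\rtt$, and potential $\Phi_\rtt$ as defined recursively in~\eqref{eq:mdp}. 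This is the heart of the argument and must be proved by induction over the \tree, showing that the \pullback operator is exactly the operation that transforms the GDS equation~\eqref{eq:GDS} on a child manifold into the corresponding GDS equation on the parent manifold. Concretely, one verifies that pulling back $[\f_{\vtt_j},\Mb_{\vtt_j}]^{\NN_j}$ through $\psi_{\ett_j}$ (with the $-\Mb_{\vtt_j}\Jd_{\ett_j}\xd$ correction term) reproduces, after summation over children, the curvature terms $\bm\Xi_{\Gb_\rtt}$, $\bm\xi_{\Gb_\rtt}$, the force $-\nabla_\q\Phi_\rtt$, and the damping $-\Bb_\rtt\qd$ dictated by~\eqref{eq:GDS}–\eqref{eq:curvatures}. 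The key identities here are the chain rule for Jacobians along edges and the standard fact that the Christoffel-symbol/curvature terms of a pulled-back metric transform covariantly; one also uses that $\Gb_\rtt = \sum_j \J_{\ett_j}^\t \Gb_{\vtt_j}\J_{\ett_j}$ makes $\Gb_\rtt$ positive definite when each leaf metric is, and that $\Mb_\rtt$ being nonsingular lets the \resolve step recover $\qdd$ uniquely.

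Granting the structural lemma, the second step is the energy dissipation computation. Differentiating $V$ along trajectories and substituting the GDS dynamics~\eqref{eq:GDS} for the root node, the curvature term $\bm\Xi_{\Gb_\rtt}\,\qdd$ and the $\bm\xi_{\Gb_\rtt}$ term are designed precisely so that the $\tfrac{1}{2}\qd^\t\dot{\Gb}_\rtt\qd$ contribution cancels against them, leaving
\[
\dot{V} \;=\; -\,\qd^\t \Bb_{\rtt}(\q,\qd)\,\qd \;\le\; 0,
\]
since $\Bb_\rtt \succ 0$. (This cancellation is the same one that makes GDSs "conservative plus dissipative"; I would cite~\eqref{eq:curvatures} and carry out the one-line manipulation rather than rederive it.) Hence $V$ is nonincreasing, trajectories are bounded (on a sublevel set of $V$), and the system is Lyapunov stable.

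The third step is LaSalle. The largest invariant set contained in $\{\dot V = 0\} = \{\qd^\t\Bb_\rtt\qd = 0\} = \{\qd = 0\}$ (using $\Bb_\rtt\succ0$) must have $\qdd = 0$ as well on the invariant set; plugging $\qd=0,\qdd=0$ into~\eqref{eq:GDS} — noting $\bm\Xi_{\Gb_\rtt}, \bm\xi_{\Gb_\rtt}$ and the damping term all vanish at $\qd=0$ — forces $\nabla_\q\Phi_\rtt = 0$. Therefore the invariant set is contained in $\CC_\infty = \{(\q,\qd): \nabla_\q\Phi_\rtt = 0,\ \qd=0\}$, and by LaSalle's invariance principle the trajectory converges to it; forward invariance of $\CC_\infty$ is immediate since any point there is an equilibrium of the dynamics.

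The main obstacle is unquestionably the structural lemma of step one: showing that \flow's recursive \pullback/\resolve really does implement a single global GDS with the metric/damping/potential of~\eqref{eq:mdp}. The bookkeeping of Jacobians, the velocity-dependence of the metrics (so that $\dot{\Gb}$ picks up both $\partial_\x$ and $\partial_{\xd}$ pieces, matching $\sdot{\Gb}{\x}$ and $\bm\Xi_{\Gb}$), and checking that the curvature terms compose correctly under the chain of maps $\psi_{\ett}$ all require care — this is where the real content lies, whereas steps two and three are then short and standard. Since this is Theorem~\ref{thm:stability} attributed to Cheng \emph{et al.}~\cite{cheng2018rmpflow}, I would either reproduce their induction in an appendix or cite it, and devote the main text to the energy/LaSalle argument.
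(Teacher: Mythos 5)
Your proposal is correct and follows essentially the same route as the proof of this theorem: an energy Lyapunov function $V=\tfrac12\qd^\t\Gb_\rtt\qd+\Phi_\rtt$, the observation that \pullback makes the root dynamics behave as a single GDS with the recursively defined $\Gb_\rtt,\Bb_\rtt,\Phi_\rtt$ so that $\dot V=-\qd^\t\Bb_\rtt\qd$, and LaSalle's invariance principle. Note that this paper cites the result from Cheng \emph{et al.} rather than reproducing the proof, but its own proof of the decentralized analogue (Theorem~\ref{thm:stability_dec}, Appendix~\ref{app:proof_stability}) carries out exactly the computation you sketch, expanding the kinetic energy in leaf coordinates and cancelling the curvature terms directly rather than first stating your structural lemma.
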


\vspace{-4mm}
\section{Centralized Control Policy Generation}\label{sec:centralized}
\vspace{-2mm}

We begin by formulating a control policy generation algorithm for multi-robot systems directly based on \flow. This algorithm is \emph{centralized} because it requires a centralized processor to collect the states of \emph{all} robots and solve for the control input for \emph{all} robots jointly given \emph{all} the subtasks. In Section \ref{sec:decentralized}, we introduce a decentralized algorithm and analyze its stability properties.

Consider a potentially heterogeneous\footnote{Please see Appendix~\ref{app:heterogeneity} for a discussion about heterogeneous teams.} team of $N$ robots indexed by $\II=\{1,\ldots,N\}$. Let $\CC_i$ be the configuration space of robot $i$ with $\q_i$ being a generalized coordinate on $\CC_i$. The configuration space is then the product manifold $\CC=\CC_1\times\cdots\times\CC_N$. As in Section \ref{sec:rmp}, we assume that each robot is feedback linearized and we model the control policy for each robot as a second-order differential equation $\qdd_i = \ub_i(\q_i, \qd_i)$. An obvious example is a team of mobile robots with double integrator dynamics on $\R^2$. 
Note, however, that the approaches proposed in this paper is not restricted to mobile robots in Euclidean spaces.

Let $\KK=\{1,\ldots,K\}$ denote the index set of all subtasks. For each subtask $k\in\KK$,
a controller is individually designed to generate RMPs on the subtask manifold $\TT_{k}$. 
Here we assume that the subtasks are \emph{pre-allocated} in the sense that each subtask $\ltt_k$ is defined for a specified subset of robots $\II_k$. 
Examples of subtasks include collision avoidance between a pair of robots (a binary subtask), trajectory following for a robot (a unitary subtask), etc. 

\emph{The above formulation gives us an alternative view of multi-robot systems with emphasis on their multi-task nature.} Rather than encoding the team-level task as a global potential function (as is commonly done in the multi-robot literature), we decompose the task as \emph{local} subtasks defined for subsets of robots, and design policies for individual subtasks. The main advantage is that as the task becomes more complex, it becomes increasingly difficult to design a single potential function that renders the desired global behavior. However, it is often natural to decompose global tasks into local subtasks, even for complex tasks, since multi-robot tasks can often come from local specifications~\cite{cortes2017coordinated,wang2016multi}. Therefore, this formulation provides a straightforward generalization to multi-objective tasks. Moreover, this subtask formulation allows us to borrow existing controllers designed for single-robot tasks, such as collision avoidance, goal reaching, etc.

Recall from Section \ref{sec:rmp} that \flow operates on an \tree, a tree structure describing the task space. The main objective of this section is thus to construct an \tree for general multi-robot problems. Note that given a set of subtasks, the construction of the \tree is not unique. One way to construct an \tree is to use non-leaf nodes to represent subset of the team:\vspace{-1mm}
\begin{itemize}
    \item The root node corresponds to the joint configuration space $\CC=\CC_1\times\cdots\times\CC_N$ and its corresponding control policy.
    \item Any leaf node $\ltt_k$ is augmented with a user-specified policy represented as an RMP on the subtask manifold $\TT_{k}$.
    \item Every non-leaf node is associated with a product space of the configuration spaces for a subset of the team.
    \item The parent of any leaf RMP $\ltt_k$ is associated with the joint configuration space $\prod_{i\in\II_{k}}\CC_i$, where $\II_{k}$ are the robots that subtask $\ltt_k$ is defined on.
    \item Consider two non-leaf nodes $\utt$ and $\vtt$ such that $\vtt$ is a decedent of $\utt$ in the \tree. Let $\II_{\utt}$ and $\II_{\vtt}$ be the subset of robots corresponds to node $\utt$ and $\vtt$, respectively. Then $\II_{\vtt}\subseteq \II_{\utt}$.\vspace{-1mm}
\end{itemize}

Fig.~\ref{fig:centralized} shows an example \tree for a team of three robots. The robots are tasked with forming a certain shape and reaching a goal while avoiding inter-robot collisions. The root of the \tree is associated with the configuration space for the team, which is the product of the configuration spaces for all three robots. On the second level, the nodes represent subsets of robots which, in this case, are pairs of robots. Several leaf nodes, such as the ones corresponding to collision avoidance and distance preservation, are children of these nodes as they are defined on pairs of robots. One level deeper is the node corresponding to the configuration space for robot $1$. The goal attractor leaf node is a child of it since the goal reaching subtask is assigned only to robot $1$.

\begin{figure}
    \centering
    \vspace{-4mm}
    \resizebox{!}{1.3in}{\includegraphics{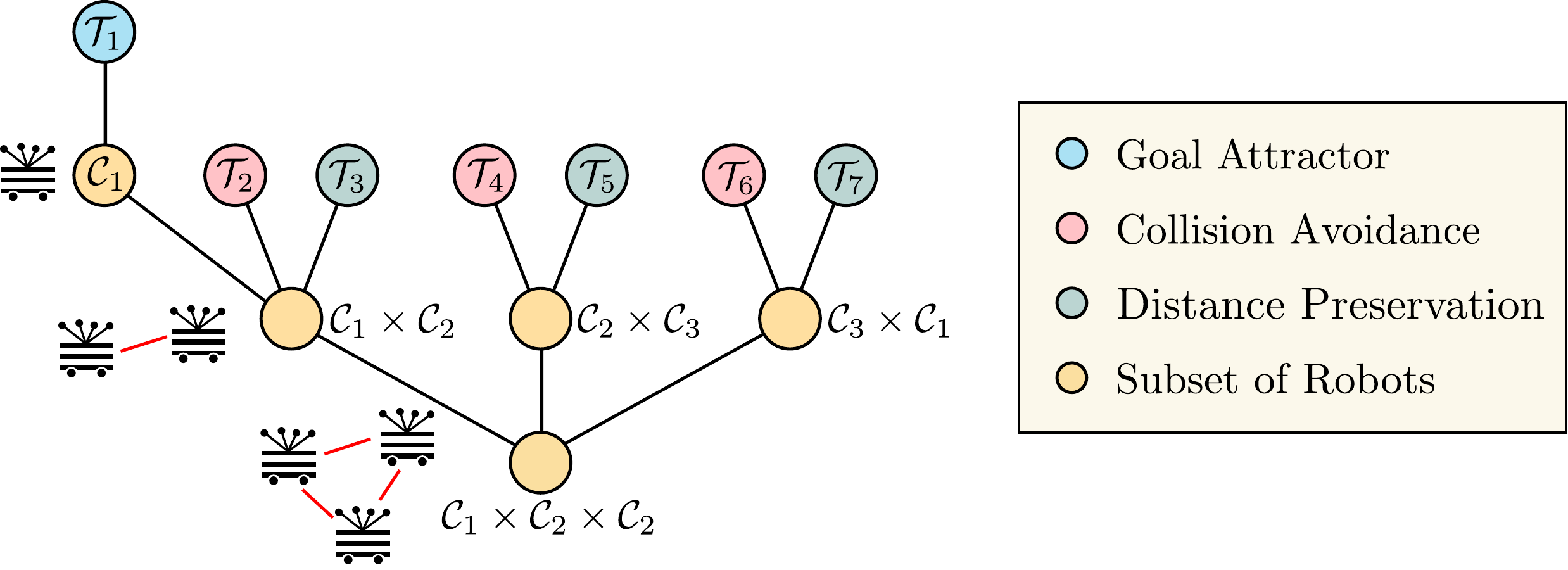}}
    \caption{An example of an \tree for a group of three robots performing a formation preservation task. See text for details.
    }
    \vspace{-4mm}

    \label{fig:centralized}
\end{figure}

Note that branching in the \tree does not necessarily define a partition over robots. Let $\vtt_i$ and $\vtt_j$ be children of the same node $\utt$ and let $\II_{\vtt_i}$ and $\II_{\vtt_j}$ be the subset of robots for node $\vtt_i$ and $\vtt_j$, respectively. Then it is \emph{not} necessary that $\II_{\vtt_i}\,\cap\,\II_{\vtt_j}=\emptyset$. For example, in Fig.~\ref{fig:centralized}, the three nodes on the second level are defined for subsets $\{1, 2\}$, $\{2, 3\}$, and $\{3, 1\}$, respectively. The intersection of any two of them is not empty. In fact, if a branching is indeed a partition, then the problem can be split into \emph{independent} sub-problems. For multi-robot systems, this means that the team consists of independent sub-teams with completely independent tasks. This rarely occurs in practice. 

According to Theorem~\ref{thm:stability}, if all the leaf node controllers are designed through GDSs, it is guaranteed that the controller generated by \flow drives the system to a forward invariant set $\CC_\infty \coloneqq \{(\q,\qd) : \nabla_\q \Phi_\rtt = 0, \qd = 0 \}$
if $\G_\rtt, \B_\rtt \succ 0$ and $\M_\rtt$ is non-singular. In other words, this guarantees that the resulting system is stable, which is important: unstable behaviors such as  high-frequency oscillation are avoided 
and, more importantly, stability provides formal guarantees on the performance of certain types of subtasks such as collision avoidance, which is discussed in Section~\ref{sec:collision_rmp}.

To elucidate the process of designing RMPs and to connect to relevant multi-robot tasks, we provide examples of RMPs for multi-robot systems that can produce complex behaviors when combined. In the following examples, we use $\x_i$ to denote the coordinate of robot $i$ in $\R^2$. An additional map can be composed with the given task maps if robots possess different kinematic structures.

\vspace{-3mm}
\subsection{Pairwise Collision Avoidance}\label{sec:collision_rmp}

To ensure safety operation of the robotic team, inter-robot collisions should be avoided. We formulate collision avoidance as ensuring a minimum safety distance $d_S$ for \emph{every} pair of robots. To generate collision-free motions, for any two robots $i,j\in\II$, we construct a collision avoidance leaf node for the pair. The subtask space is the 1-d distance space, i.e. $z=\psi(\x_i, \x_j)=\|\x_i-\x_j\|/d_S-1$. Here, we use $z$ (italic) to denote that it is a scalar on the 1-d space.

To ensure a safety distance between the pair,
we use a construction similar to the collision avoidance RMP for static obstacles in \cite{cheng2018rmpflow}. The metric for the pairwise collision avoidance RMP is defined as $\Gb(z,\dot{z})=w(z)\,u(\dot{z})$, where $w(z)=\frac{1}{z^4}$,  $u(\dot{z})=\epsilon+\min(0,\dot{z})\,\dot{z}$ with a small positive scalar $\epsilon>0$. The metric retains a large value when the robots are close to each other ($z$ is small), and when the robots are moving fast towards each other ($\dot{z}<0$ and $|\dot{z}|$ is large). Conversely, the metric decreases rapidly as $z$ increases. Recall that the metric is closely related to the inertial matrix, which determines the importance of the RMP when combined with other policies. This means that the collision avoidance RMP dominates when robots are close to each other or moving fast towards each other, while it has almost no effect 
when the robots are far from each other.

We next design the GDS that generates the collision avoidance RMP. The potential function is defined as $\Phi(z)=\frac{1}{2}\alpha\,w(z)^2$ and the damping matrix is defined as $\Bb(z,\dot{z})=\eta \,\G(z,\dot{z})$, where $\alpha,\eta$ are positive scalars. As the robots approach the safety distance, the potential function $\Phi(z)$ approaches infinity. Due to the stability guarantee of \flow, this barrier-type potential will always ensure that the distance between robots is greater than $d_S$. 
This means that the resulting control policy from \flow is \emph{always} collision-free.

\vspace{-2mm}
\subsection{Pairwise Distance Preservation}\label{sec:formation_rmp}

Another common task for multi-robot systems is to form a specified shape or formation. This can be accomplished by maintaining the inter-robot distances between certain pairs of robots. Therefore, formation control can be induced by a set of leaf nodes that maintain distances. Such an RMP can be defined on the 1-d distance space, $z=\psi(\x_i, \x_j)=\|\x_i-\x_j\|-d_{ij}$,
where $d_{ij}$ is the desired distance between robot $i$ and robot $j$. For the GDS, we use a constant metric $\G\,\equiv\,c\in\R_{++}$. The potential function is defined as $\Phi(z)=\frac{1}{2}\,\alpha\,z^2$ and the damping is $\B(\x,\xd)\equiv\eta$, with $\alpha,\eta>0$. We will refer to this RMP as \emph{Distance Preservation RMPa} in later sections.

Note that the above RMP is not equivalent to the potential-based formation controller in, e.g. \cite{mesbahi2010graph,cortes2017coordinated}. However, there does exist an RMP that has very similar behavior. 
It is defined on the product space, $\z=(\x_i,\x_j)$. The metric for the RMP is also constant, $\G\,\equiv\,c\,\I$, where $c\in\R_{++}$ and $\I$ denotes the identity matrix. The potential function is defined as $\Phi(\z)=\frac{1}{2}\,\EE_{ij}(\|\x_i-\x_j\|)$, where $\EE_{ij}:\R\to\R$ is differentiable and achieves its minimum at $d_{ij}$. Common choices include $\EE_{ij}(s)=(s-d_{ij})^2$ and $\EE_{ij}(s)=(s^2-d_{ij}^2)^2$~\cite{mesbahi2010graph}. The damping matrix is defined as $\B\,\equiv\,\eta\, \I$, with $\eta>0$. This RMP will be referred to as \emph{Distance Preservation RMPb} in later sections.

When there are only distance preserving RMPs in the \tree, the resulting individual-level dynamics are given by \vspace{-2mm}
\begin{equation}\label{eq:formation_b}
\xdd_i = -\frac{\alpha}{c\,D_i} \sum_{j:(i,j)\in E}\nabla_{\x_i}\EE_{ij}(\|\x_i-\x_j\|) - \frac{\eta}{c}\,\xd_i, \vspace{-2mm}
\end{equation}
where $E$ represents the set of edges in the formation graph, and $D_i=|\{j:(i,j)\in E\}|$ is the degree of robot $i$. This is closely related to the gradient descent update rule over the potential function $\EE(x)=\frac{1}{2}\,\sum_{(i,j)\in E}\EE_{ij}(\|\x_i-\x_j\|)$ with an additional damping term, and normalized by the degree of the robot. We will later prove in Section \ref{sec:potential_rmp} that the degree-normalized potential-based controller and the original potential-based controller have similar behaviors in the sense that the resulting systems converge to the same invariant set.

The main difference between the two distance preserving RMPs is the space on which they are defined. The first RMP is defined on a 1-d distance space while the second RMP is defined on a higher dimensional space. Therefore, the first RMP is more permissive in the sense that it only specifies desired behaviors in a one dimensional submanifold of the configuration space. 
This is illustrated through a simulated formation preservation task in Section~\ref{sec:results}.

\vspace{-2mm}
\subsection{Potential-based Controllers from RMPs}\label{sec:potential_rmp}

Designing controllers based on the gradient descent rule of a potential function is very common in the multi-robot systems literature, e.g.~\cite{mesbahi2010graph,cortes2017coordinated,cortes2004coverage}. Usually, the overall potential function $\EE$ is the sum of a set of symmetric, pairwise potential functions $\EE_{ij}(\|\x_i-\x_j\|)$ between robot $i$ and robot $j$ that are adjacent in an underlying graph structure. When the robots follow double-integrator dynamics, a damping term is typically introduced to guarantee convergence to an invariant set. Let $\x$ be the ensemble-level state of the team. The controller is given by,
$\xdd = \ub = - \nabla \EE - \eta\,\xd$, where $\eta$ is a positive scalar.
We define a \emph{degree-normalized}  potential-based controller as, $\ub = -\bm\Gamma\,(\nabla \EE + \eta\,\xd)$, where $\bm\Gamma$ is a diagonal matrix with $\bm\Gamma_{ii}=1/D_i$ and $D_i$ is the degree of robot $i$ in the graph. \vspace{-1mm}
\begin{restatable}{theorem}{degreeNormalized}\label{thm:degree_normalized}
Both the degree-normalized controller and the original potential-based controller converge to the invariant set $\{(\x,\xd):\nabla\EE=0, \xd=0\}$. \vspace{-2mm}
\end{restatable}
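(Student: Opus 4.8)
The plan is to treat both controllers simultaneously as the single family
$\ub = -\bm\Gamma\,(\nabla\EE + \eta\,\xd)$, where $\bm\Gamma$ is a positive definite diagonal matrix: $\bm\Gamma = \I$ recovers the original potential-based controller, while $\bm\Gamma_{ii} = 1/D_i$ recovers the degree-normalized one (I would first dispose of any robot with $D_i = 0$, which is decoupled from the rest and for which $\nabla_{\x_i}\EE \equiv 0$, so that all remaining $D_i > 0$ and $\bm\Gamma \succ 0$). Then I would run LaSalle's invariance principle with the weighted total energy
$V(\x,\xd) = \EE(\x) + \tfrac12\,\xd^\t\bm\Gamma^{-1}\xd$
as the Lyapunov candidate. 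The point of the weighting by $\bm\Gamma^{-1}$ is exactly to cancel the $\bm\Gamma$ in the control law.

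The key computation is a one-liner: along trajectories,
\begin{equation*}
\dot V = \nabla\EE^\t\xd + \xd^\t\bm\Gamma^{-1}\xdd
 = \nabla\EE^\t\xd - \xd^\t\bm\Gamma^{-1}\bm\Gamma(\nabla\EE + \eta\xd)
 = -\eta\,\|\xd\|^2 \le 0 .
\end{equation*}
Hence $V$ is non-increasing. Using that the standard pairwise potentials $\EE_{ij}$ (e.g.\ $(s-d_{ij})^2$ or $(s^2-d_{ij}^2)^2$) are bounded below, $V$ is bounded below, so $V$ converges and the trajectory stays in a sublevel set of $V$. By LaSalle's invariance principle the trajectory approaches the largest invariant set contained in $\{(\x,\xd): \dot V = 0\} = \{(\x,\xd): \xd = 0\}$. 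On any invariant subset of $\{\xd = 0\}$ we have $\xd \equiv 0$, hence $\xdd \equiv 0$, hence $\bm\Gamma(\nabla\EE + 0) = 0$, and since $\bm\Gamma$ is invertible this forces $\nabla\EE = 0$; conversely every point of $\{\nabla\EE = 0, \xd = 0\}$ is an equilibrium. Therefore the largest invariant set is exactly $\{(\x,\xd): \nabla\EE = 0, \xd = 0\}$, and both controllers converge to it. (One can also note that, via equation~\eqref{eq:formation_b}, the degree-normalized controller is precisely what \flow\ produces from Distance Preservation RMPb's, so this is consistent with Theorem~\ref{thm:stability}, with $\G_\rtt = c\,\bm\Gamma^{-1}\otimes\I \succ 0$ and $\Phi_\rtt = \EE$.)

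The main obstacle is the technical hypothesis of LaSalle: precompactness of trajectories. For a connected formation with translation-invariant $\EE$, sublevel sets of $V$ are \emph{not} compact (the centroid can drift), so boundedness of the full state is not automatic. I would address this either (i) by assuming/observing that trajectories remain bounded — which is the standard setting in the cited formation-control references — or (ii) by a Barbalat-style argument: $\int_0^\infty \eta\|\xd\|^2\,dt = V(0) - V_\infty < \infty$, and since $\|\xd\|$ is uniformly continuous along any bounded trajectory (its derivative involves $\xdd = -\bm\Gamma(\nabla\EE+\eta\xd)$, bounded on sublevel sets in the relevant variables), $\xd \to 0$; one then invokes invariance of the $\omega$-limit set to upgrade this to $\nabla\EE \to 0$. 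Everything else — positivity of $\bm\Gamma$, the energy identity, and the characterization of the invariant set — is routine.
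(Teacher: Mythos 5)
Your proposal is correct and follows essentially the same route as the paper: the paper also uses the Lyapunov candidates $V=\tfrac12\|\xd\|^2+\EE$ and $V=\tfrac12\xd^\t\bm\Gamma^{-1}\xd+\EE$ for the two controllers, obtains $\dot V=-\eta\|\xd\|^2$, and concludes by LaSalle's invariance principle. Your unified $\bm\Gamma$-parametrization, the explicit characterization of the largest invariant set, and your attention to the precompactness hypothesis of LaSalle (which the paper silently assumes) are refinements of the same argument rather than a different approach.
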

\begin{proof}
For the original controller, consider the Lyapunov function candidate $V(\x,\xd)=\frac{1}{2}\|\xd\|^2 + \EE(\x)$. Then $\dot{V} = \xd^\t(\xdd + \nabla\EE)=-\eta\,\|\xd\|^2$. By LaSalle's invariance principle~\cite{khalil1996noninear}, the system converges to the set $\{(\x,\xd):\nabla\EE=0, \xd=0\}$. For the degree-normalized controller, consider the Lyapunov function candidate $V(\x,\xd)=\frac{1}{2}\xd^\t \bm\Gamma^{-1}\xd + \EE(\x)$. Then $\dot{V} = \xd^\t(\bm\Gamma^{-1}\xdd + \nabla\EE)=-\eta\,\|\xd\|^2$. The system also converges to the same set by LaSalle's invariance principle~\cite{khalil1996noninear}. \vspace{-1mm}
\end{proof}
Therefore, similar to potential-based formation control, one can directly implement the degree-normalized version
of these potential-based controllers by RMPs defined on the product space, $\z=(\x_i,\x_j)$. The potential function for the RMP is defined as $\Phi(\z)=\EE_{ij}(\|\x_i-\x_j\|)$. Constant metric and damping can be used, e.g. $\G\,\equiv\,c\,\I$, and $\B\,\equiv\,\eta\, \I$, where $c$ and $\eta$ are positive scalars.
Moreover, similar to formation control, one can also define RMPs on the distance space $z=\psi(\x_i, \x_j)=\|\x_i-\x_j\|$ with potential function $\Phi=\EE_{ij}$. Since RMPs are defined on a lower-dimensional manifold, this approach may provide additional degrees of freedom when these RMPs are combined with other policies.

\vspace{-2mm}
\section{Decentralized Control Policy Generation}\label{sec:decentralized}
\vspace{-2mm}

Although the centralized \flow algorithm can be used to generate control policies for multi-robot systems, it can be demanding in both communication and computation. Therefore, 
we develop a decentralized approximation of \flow that 
only relies on local communication and computation.

Before discussing the algorithm, a few definitions and assumptions are needed. 
Given the set of all subtasks $\KK$, we say two robots $i$ and $j$ are neighbors if and only if there exists a subtask such that both robots are involved in. 
We then say that the algorithm is decentralized if only the \emph{state} information of the robot's direct neighbors is required to solve for its control input. Note that here we implicitly assume that the robots are equipped with the sensing modality or communication modality to access the \emph{state} of the neighbors. We also assume that the map and the Jacobian matrix for a subtask are known to the robot if the robot is involved in the subtask. For example, for the formation control task, the robot should know how to calculate distance between two robots given their states, and also know the partial derivatives of the distance function.

The major difference between the decentralized algorithm and the centralized \flow algorithm is that, in the decentralized algorithm, there is no longer a centralized root node that can generate control policies for \emph{all} robots. Instead, each robot should have its \emph{own} \tree that generates policies based on the information available locally. Therefore, the decentralized algorithm actually operates on a \emph{forest} with $N$ RMP-trees, called the \forest. An example 
\forest is shown in Fig.~\ref{fig:forest}. There are three robots performing the same formation preservation task as in Fig.~\ref{fig:centralized}. Hence, there are three RMP-trees in the \forest. For each \tree, there are leaf nodes for every subtask relevant to the robot associated with the \tree. As a result, there are multiple copies of certain subtasks in the \forest, for example, the collision avoidance node for robot $1$ and $2$ appears twice: once in the \tree of robot $1$, and once in the \tree of robot $2$. However, these copies do not share information. 

\begin{figure*}
    \centering
    \vspace{-6mm}
    \resizebox{!}{1.25in}{\includegraphics{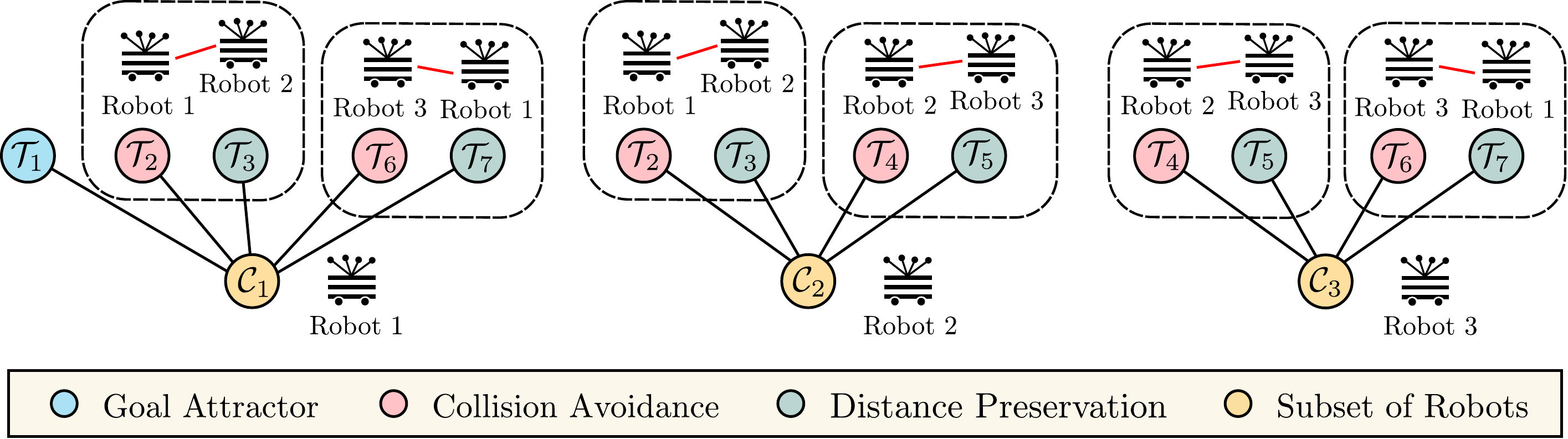}}
    \caption{A decentralized \forest. The three robots are tasked with the same formation preservation task as in Fig.~\ref{fig:centralized}. For the decentralized algorithm, each robot has an individual \tree to solve for its control input. All the leaf RMP nodes that are relevant to the robot are in its \tree. For example, for robot $1$, collision avoidance and distance preservation RMPs for both the pair $\{1, 2\}$ and $\{3, 1\}$ are introduced. There is a goal attractor for robot $1$ since it is the leader. Note that there are several copies of the same subtasks in the forest, 
    however, these copies 
    do not share information with each other. \vspace{-6mm}
    }
    \label{fig:forest}
\end{figure*}

We call the decentralized approximation \emph{partial \flow}. 
In partial \flow, every subtask is viewed as a time-varying unitary task. Therefore, following the \tree construction in the previous section, it is natural to consider one-level RMP-trees, where the leaf nodes are direct children of the root nodes.

Notationwise, let $\KK_i$ be the set of subtasks that robot $i$ participates in. Since there are multiple copies of the same subtasks in the \forest, we use $\ltt_k^i$ to denote the node corresponds to the copy of subtask $k$ in the tree of robot $i$ while let $\ett_k^i$ denote the edge from the root of tree $i$ to the leaf node $\ltt_k^i$. We let $\psi_{\ett_k}$ denote the smooth map from the joint configuration space $\prod_{j\in\II_k}\,\CC_j$ to the subtask space $\TT_k$ (which is the same across trees in the \forest) and let $\J_{\ett_k}^i$ be the Jacobian matrix of $\psi_{\ett_k}$ with respect to $\q_i$ only, i.e. $\J_{\ett_k}^i=\partial_{\q_i}\psi_{\ett_k}$.

To compute the control input for robot $i$, an algorithm similar to \flow is applied in \tree $i$:\vspace{-1mm}
\begin{itemize}
    \item \pushforward: Let $\{\z_{k}^i\}_{k\in\KK_i}$ be the coordinates of the leaf nodes of \tree $i$. Given the state of the root $(\q_i,\qd_i)$, its state is computed as, $\z_{k}^i=\psi_{\ett_k}(\q_{\II_k})$, $\zd_{k}^i=\J_{\ett_k}^i\,\qd_i$, where $\q_{\II_k}=\{\q_j:j\in\II_k\}$. It is worth noting that $\zd_{k}^i\neq \frac{d}{dt} \z_{k}^i$, since the other robots are considered static when computing $\zd_{k}^i$.
    \item \texttt{Evaluate}:
    Let $\Mb_{\ltt_k}$, $\Bb_{\ltt_k}$, and $\Phi_{\ltt_k}$ be the user-designed metric, damping matrix, and potential function for subtask $\ltt_k$. For notational simplicity, we denote $\Gb_{\ltt_k}^i=\Gb_{\ltt_k}(\z_{\ltt_k}^i,\zd_{\ltt_k}^i)$,  $\Bb_{\ltt_k}^i=\Bb_{\ltt_k}(\z_{\ltt_k}^i,\zd_{\ltt_k}^i)$, and $\Phi_{\ltt_k}^i=\Phi_{\ltt_k}(\z_{\ltt_k}^i)$. At leaf node $\ltt_k^i$, the RMP is given by the following system (similar to GDS), \vspace{-3mm}
    \begin{equation}\label{eq:dec_leaf} 
             \fb_{\ltt_k}^i = - \nabla_{\z_k^i} \Phi_{\ltt_k}^i - \Bb_{\ltt_k}^i\,\zd_{k}^i - \frac12\,\sdot{\Gb_{\ltt_k}^i}{\z_{k}^i}(\z_k^i,\zd_{k}^i)\,\zd_{k}^i,\quad
             \Mb_{\ltt_k}^i = \Mb_{\ltt_k}(\y_{\ltt_k}^i,\yd_{\ltt_k}^i)\vspace{-2mm}
    \end{equation}
    Note that, to provide stability, the RMP is no longer generated by a GDS. In particular, the curvature term compensates for the motion of other robots.

    \item \pullback: Given the RMPs from the leaf nodes of tree $i$,  $\{[\f_{\ltt_k}^i,\M_{\ltt_k}^i]^{\TT_k}\}_{k\in\KK_i}$, the \pullback operator calculates the RMP for the root node of tree $i$, \vspace{-1mm}
    \begin{equation}
        \f_{\rtt}^i= \sum_{k\in\KK_i} (\J_{\ett_k}^i)^\t (\f_{\ltt_k}^i - \M_{\ltt_k}^i \dot{\J}_{\ett_k}^i \qd_i),\quad\M_{\rtt}^i = \sum_{k\in\KK_i} (\J_{\ett_k}^i)^\t \M_{\ltt_k}^i\J_{\ett_k}^i.\vspace{-2mm}
    \end{equation}
    \item \resolve: The control input is given by $\ub_i=\ab_{\rtt}^i = (\M_{\rtt}^i)^{\dagger}\,\f_{\rtt}^i$.\vspace{-1mm}
\end{itemize}

Note that when all the metrics are constant diagonal matrices and all the Jacobian matrices are identity matrices, the decentralized partial \flow framework has exactly the same behavior as \flow. This, in particular, holds for the degree-normalized potential-based controllers discussed in Section \ref{sec:potential_rmp}. Therefore, the decentralized partial \flow framework can also reconstruct a large number of multi-robot controllers up to degree normalization.

Partial \flow has a stability result similar to \flow, which is stated in the following theorem. \vspace{-1mm}
\begin{restatable}{theorem}{stabilityDec}\label{thm:stability_dec}
Let $\G_{\rtt}^i = \sum_{k\in\KK_i} (\J_{\ett_k}^i)^\t \G_{\ltt_k}^i\J_{\ett_k}^i$, $\B_{\rtt}^i = \sum_{k\in\KK_i} (\J_{\ett_k}^i)^\t \B_{\ltt_k}^i\J_{\ett_k}^i$, and $\Phi_{\rtt}^i=\sum_{k\in\KK_i}\Phi_{\ltt_k}^i\circ\psi_{\ett_k}$ be the metric, damping matrix, and potential of the tree $i$'s root node. If $\G_{\rtt}^i, \B_{\rtt}^i \succ 0$ and $\M_{\rtt}^i$ is nonsingular for all $i\in\II$, the system converges to a forward invariant set $\CC_\infty \coloneqq \{(\q,\qd) : \nabla_{\q_i} \Phi_{\rtt}^i = 0, \qd_i = 0,\forall i\in\II \}.\vspace{-1mm}$
\end{restatable}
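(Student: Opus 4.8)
The plan is to mimic the Lyapunov argument behind Theorem~\ref{thm:stability}, but separately on each of the $N$ trees in the \forest, while being careful about the fact that each tree treats the other robots as static. The natural candidate is the sum of per-robot energies
\[
V(\q,\qd) \;=\; \sum_{i\in\II}\Bigl(\tfrac12\,\qd_i^\t\,\G_{\rtt}^i(\q,\qd_i)\,\qd_i \;+\; \Phi_{\rtt}^i(\q)\Bigr),
\]
where $\G_{\rtt}^i$ and $\Phi_{\rtt}^i$ are the root metric and potential of tree $i$ defined in the statement. The first step is to unpack, for a fixed $i$, what the \resolve/\pullback output $\ub_i = (\M_{\rtt}^i)^\dagger \f_{\rtt}^i$ actually is: combining the leaf equations~\eqref{eq:dec_leaf} with the \pullback formulas gives, on the image of $\M_{\rtt}^i$, a relation of the GDS type
\[
\bigl(\G_{\rtt}^i + \bm\Xi_{\G_{\rtt}^i}\bigr)\qdd_i + \bm\xi_{\G_{\rtt}^i} \;=\; -\nabla_{\q_i}\Phi_{\rtt}^i - \B_{\rtt}^i\,\qd_i,
\]
where the key point (flagged in the text after~\eqref{eq:dec_leaf}) is that the extra $-\tfrac12\,\sdot{\G_{\ltt_k}^i}{}\,\zd_k^i$ curvature correction at the leaves is exactly what is needed so that the pulled-back curvature terms $\bm\Xi_{\G_{\rtt}^i},\bm\xi_{\G_{\rtt}^i}$ come out as the ones \emph{induced by} $\G_{\rtt}^i$ in the sense of~\eqref{eq:curvatures}, \emph{with all partial derivatives taken with respect to $\q_i$ only} — i.e.\ as if the other robots were frozen. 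So I would first establish this "structure lemma": the decentralized root RMP for robot $i$ is a GDS in the variable $\q_i$ alone, with metric $\G_{\rtt}^i$, damping $\B_{\rtt}^i$, potential $\Phi_{\rtt}^i$, treating $\q_j$, $j\neq i$, as frozen parameters. This should follow by the same induction on tree depth used in~\cite{cheng2018rmpflow} for Theorem~\ref{thm:stability}, except the tree is only one level deep here, so it is essentially a direct computation chaining \eqref{eq:dec_leaf} through the \pullback sums.

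Next I would differentiate $V$ along the closed-loop trajectory. Here lies the main subtlety and the expected obstacle: because $\G_{\rtt}^i$ and $\Phi_{\rtt}^i$ depend on \emph{all} $\q_j$, the time derivative $\dot V$ picks up cross terms $\sum_i\sum_{j\neq i}(\partial_{\q_j}\text{-stuff})\,\qd_j$ that do not obviously cancel. The way to handle this is to split $\dot V$ into, for each $i$, the "own" contribution $\partial_{\q_i}$ and $\partial_{\qd_i}$ — which, by the structure lemma plus exactly the GDS energy identity from the proof of Theorem~\ref{thm:stability}, telescopes to $-\qd_i^\t\B_{\rtt}^i\qd_i \le 0$ — plus the "cross" contributions from $\partial_{\q_j}$ for $j\neq i$. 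The crucial observation is that these cross terms do \emph{not} cancel in general: partial \flow is only an \emph{approximation}, and the theorem's invariant set $\CC_\infty$ is characterized by $\nabla_{\q_i}\Phi_{\rtt}^i = 0$ and $\qd_i = 0$ for all $i$. On that set every $\qd_i = 0$, so all cross terms vanish identically there. This suggests the right move is not to prove $\dot V \le 0$ globally, but to work directly with LaSalle on an appropriate sublevel set: I would argue that on $\{\qd = 0\}$ the cross terms and the damping terms all vanish, and use the structure of~\eqref{eq:dec_leaf} to show trajectories are bounded. \emph{(If instead the authors intend $\dot V\le 0$ to hold outright, then it must be that the cross terms are actually absent — which would require the leaf metrics $\G_{\ltt_k}$ and potentials $\Phi_{\ltt_k}$ to be such that $\partial_{\q_j}\Phi_{\rtt}^i$ is accounted for; checking whether the paper's standing assumptions force this is the first thing I would verify, and it is the make-or-break point of the whole argument.)}

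Assuming the own-terms give $-\qd_i^\t\B_{\rtt}^i\qd_i$ and the cross-terms can be controlled (the hard step), the remainder is routine: with $\B_{\rtt}^i\succ 0$ for all $i$, one has $\dot V \le -\sum_i\lambda_{\min}(\B_{\rtt}^i)\|\qd_i\|^2 \le 0$ on the relevant set, so by LaSalle's invariance principle~\cite{khalil1996noninear} trajectories converge to the largest invariant subset of $\{\dot V = 0\} = \{\qd_i = 0,\ \forall i\}$. On that set $\qdd_i = 0$ as well, and feeding $\qd_i = \qdd_i = 0$ back into the GDS form from the structure lemma forces $\nabla_{\q_i}\Phi_{\rtt}^i = 0$ for every $i$ (using that $\M_{\rtt}^i$ is nonsingular so that $\ub_i = (\M_{\rtt}^i)^\dagger\f_{\rtt}^i = (\M_{\rtt}^i)^{-1}\f_{\rtt}^i$ and the curvature terms vanish at zero velocity). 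This is exactly $\CC_\infty$, and forward invariance is immediate since $\CC_\infty$ is an invariant set contained in a sublevel set of $V$. I would close by noting that the nonsingularity and positive-definiteness hypotheses are used in precisely the same two places as in Theorem~\ref{thm:stability}: positive-definiteness of $\B_{\rtt}^i$ for the sign of $\dot V$, and nonsingularity of $\M_{\rtt}^i$ to invert \resolve and pin down the equilibrium condition.
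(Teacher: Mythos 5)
You correctly flag the make-or-break point---the cross terms arising because each robot's root metric and potential depend on the \emph{other} robots' configurations---but your Lyapunov candidate is the wrong one and your proposed fallback does not close the gap. Summing $\Phi_{\rtt}^i$ over $i$, as your $V$ does, counts each subtask potential $|\II_k|$ times, whereas the per-robot kinetic energies only supply one copy of the corresponding power; with your $V$ a residual of the form $\sum_{k\in\KK}(|\II_k|-1)\,\zd_k^\t\nabla_{\z_k}\Phi_{\ltt_k}$ survives in $\dot V$ and has no sign. The paper instead takes $V=\sum_i K_i+\Phi$ with $K_i=\tfrac12\qd_i^\t\G_{\rtt}^i\qd_i$ and the \emph{shared} potential $\Phi=\sum_{k\in\KK}\Phi_{\ltt_k}\circ\psi_{\ett_k}$ counted once. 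The cancellation is then exact and elementary: the modified curvature term in \eqref{eq:dec_leaf} is chosen precisely so that $\frac{d}{dt}K_i=-\sum_{k\in\KK_i}(\zd_{k}^i)^\t\nabla_{\z_k}\Phi_{\ltt_k}-\qd_i^\t\B_{\rtt}^i\qd_i$, and since the partial velocities sum to the true subtask velocity, $\zd_k=\sum_{i\in\II_k}\J_{\ett_k}^i\qd_i=\sum_{i\in\II_k}\zd_k^i$, exchanging the order of summation gives $\sum_i\sum_{k\in\KK_i}(\zd_{k}^i)^\t\nabla_{\z_k}\Phi_{\ltt_k}=\sum_{k\in\KK}\zd_k^\t\nabla_{\z_k}\Phi_{\ltt_k}=\dot\Phi$, so $\dot V=-\sum_i\qd_i^\t\B_{\rtt}^i\qd_i\le 0$ holds globally and LaSalle applies directly. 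This is the missing idea: the cross terms are not absent, they are exactly absorbed by the once-counted shared potential.

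Two further points. Your fallback of ``working with LaSalle on the set where the cross terms vanish'' is not a valid use of the invariance principle---it requires $\dot V\le 0$ on an entire compact positively invariant set, not merely on the candidate limit set. And your ``structure lemma'' is also off target: the paper explicitly notes that the decentralized leaves are \emph{not} GDSs; the curvature correction $-\tfrac12\,\sdot{\Gb_{\ltt_k}^i}{\z_{k}^i}\zd_{k}^i$ in \eqref{eq:dec_leaf} is engineered to make the energy-rate identity above come out, not to reconstitute a GDS in $\q_i$ with the other robots frozen. Your endgame (LaSalle forcing $\qd_i=0$, then nonsingularity of $\M_{\rtt}^i$ pinning down $\nabla_{\q_i}\Phi_{\rtt}^i=0$) matches the paper once the correct $V$ is in place.
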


\begin{proof}
See Appendix~\ref{app:proof_stability}.\vspace{-1mm}
\end{proof}

\vspace{-4mm}
\section{Experimental Results}\label{sec:results}
\vspace{-2mm}

We evaluate the multi-robot RMP framework through both simulation and robotic implementation. The detailed choice parameters in the experiments and additional simulation results can be found in Appendix~\ref{app:experiments} and Appendix~\ref{app:simulation}. 


\vspace{-2mm}
\subsection{Simulation Results}

Formation preservation tasks \cite{anderson2008rigid}, where robots must maintain a certain formation while the leader is driven by some external force, are considered harder than formation control tasks since one needs to carefully balance the external force and the formation controller. However, since translations and rotations can still preserve shape, the team \emph{should} have the capability of maintaining the \mbox{formation regardless of the motion of the leader.}

We consider a formation preservation task in simulation where a team of five robots are tasked with forming a regular pentagon while the leader has the additional task of reaching a goal. 
The two distance preservation RMPs introduced in Section \ref{sec:formation_rmp} are compared.  Distance preservation RMPs are defined for all edges in the formation graph. To move the formation, an additional goal attractor RMP is defined for the leader robot, where the construction of the goal attractor RMP can be found in Appendix~\ref{app:attractor_rmp} (referred to as Goal Attractor RMPa) or~\cite{cheng2018rmpflow}. 
We use a damper RMP defined by a GDS on the configuration space of every single robot with only damping  
so that the robots can reach a full stop at the goal. Fig. \ref{fig:fromation_rmp1} shows the resulting behavior for the distance preserving RMPa. 
The robots manage to preserve shape while the leader robot is reaching the goal since the subtasks are defined on lower-dimensional manifolds. By contrast, the behavior for distance preservation RMPb (which is equivalent to the degree-normalized potential-based controller) 
is shown in Fig.~\ref{fig:fromation_rmp2}. This distance preservation RMP fails to maintain the formation when the leader robot is attracted to the goal.

\begin{figure}
    \centering
    \vspace{-8mm}
	\subfloat[Dist. Prsv. RMPa\label{fig:fromation_rmp1}]{
		\resizebox{!}{1.15in}{\includegraphics{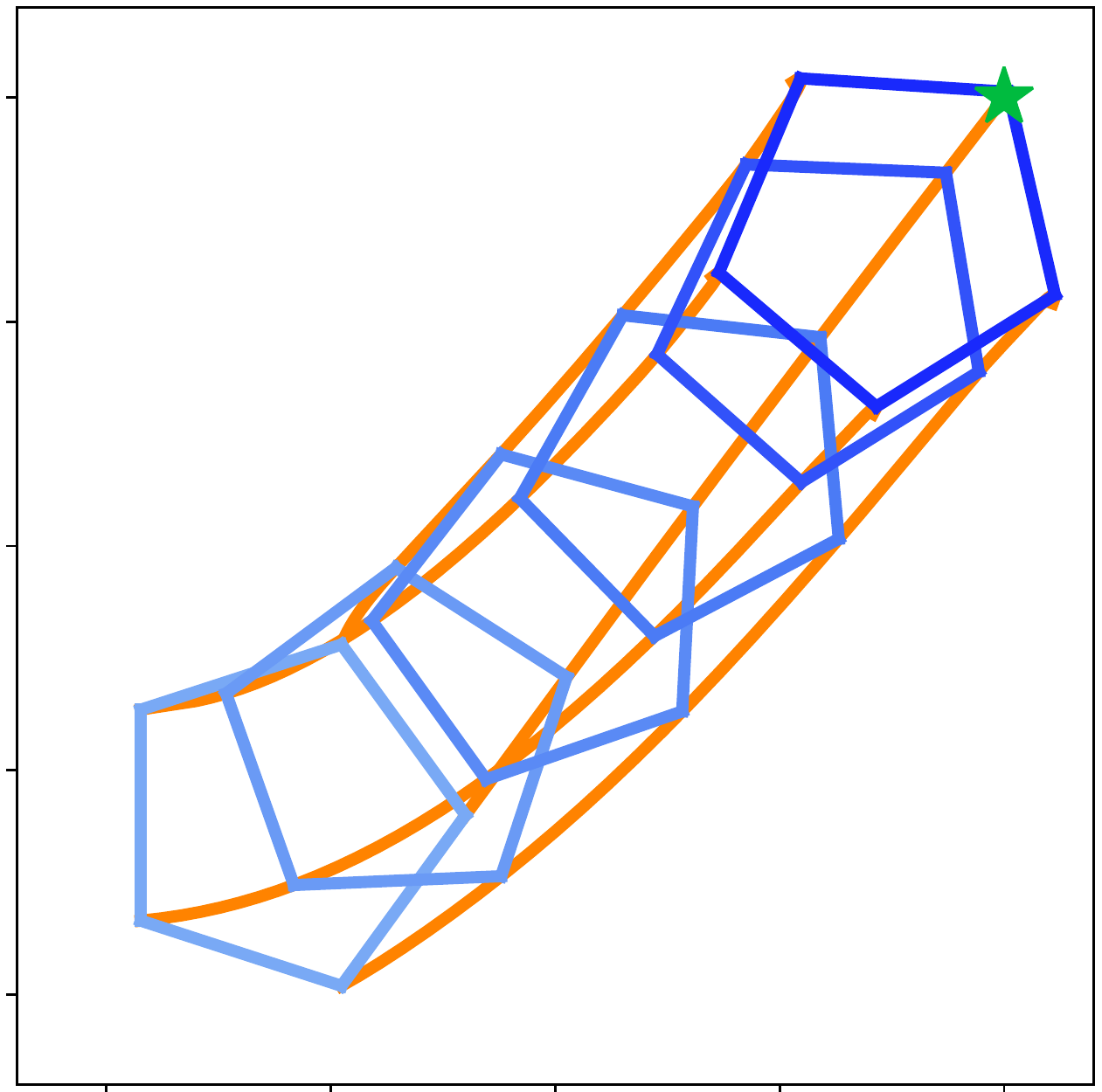}}
	}
	\subfloat[Dist. Prsv. RMPb\label{fig:fromation_rmp2}]{
    	\resizebox{!}{1.15in}{\includegraphics{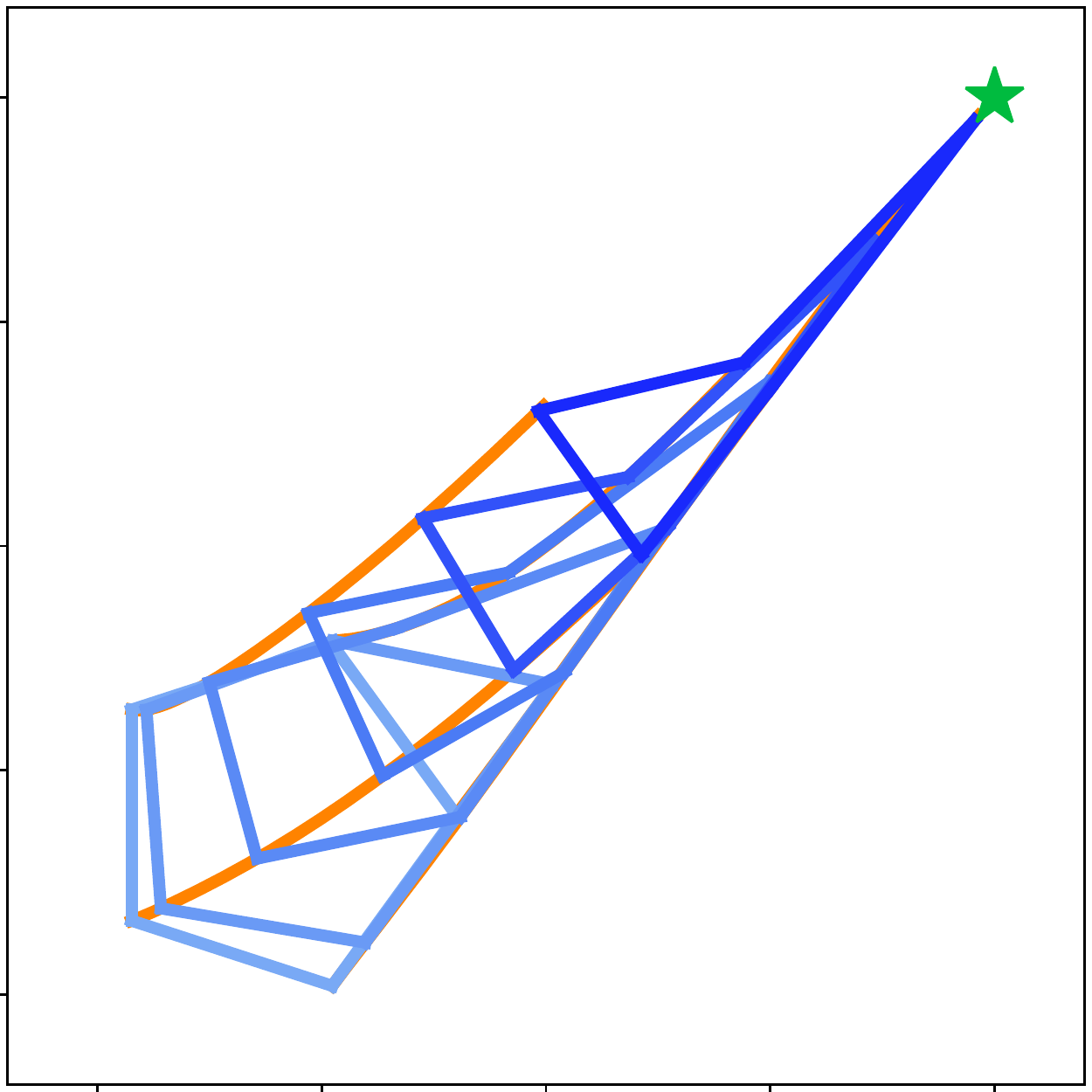}}
	}
	\resizebox{!}{1.15in}{\includegraphics{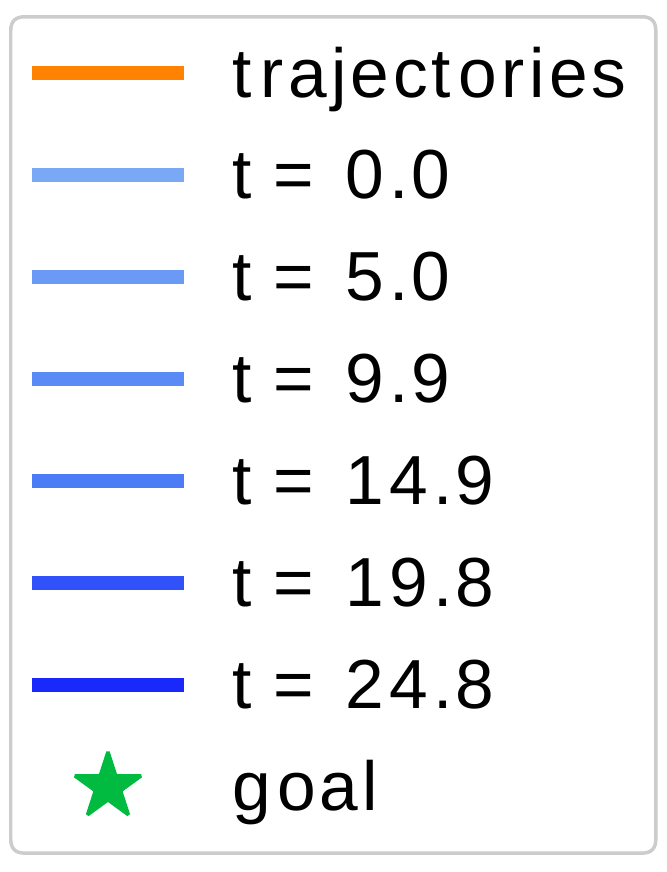}}
	\vspace{-2mm}
    \caption{(a) The behavior of distance preservation RMPa when combined with a goal attractor RMP. The blue pentagons from light to dark denote the shape from $t=0.0s$ to $t=24.8s$. The orange curves show the trajectories of the robots. The robots manage to reach the goal while maintaining the shape. (b) The same task with distance preservation RMPb. The robots fail to maintain the shape.\vspace{-7mm}
    }
    \label{fig:formation_pres}
\end{figure}

\vspace{-4mm}
\subsection{Robotic Implementations}

We present several experiments (video: \url{https://youtu.be/VZHr5SN9wXk})
conducted on the Robotarium~\cite{pickem2017robotarium}, a remotely accessible swarm robotics platform. Since the centralized \flow framework and the decentralized partial \flow frameworks have their own features, we design a separate experiment for each framework to show their full capability.

\vspace{-4mm}
\subsubsection{Centralized \flow Framework}

The main advantage of the centralized \flow framework is that the subtask spaces are jointly considered and hence the behavior of each controller is combined consistently. To fully exploit this feature, we consider formation preservation with two sub-teams of robots. The two sub-teams are tasked with maintaining their formation while moving back and forth between two goal points $A$ and $B$. The five robots in the first sub-team are assigned a regular pentagon formation and the four robots in the second sub-team must form a square. At the beginning of the task, goal $A$ is assigned to the first sub-team and goal $B$ to the second sub-team. 
The robots negotiate their path so that their trajectories are collision free.

A combination of distance preservation RMPs, collision avoidance RMPs, goal attractor RMPs, and damper RMPs are used to achieve this behavior. The construction of the \tree is similar to Fig.~\ref{fig:centralized}. A distance preservation RMPa is assigned to every pair of robots that corresponds to an edge in the formation graph, while collision avoidance RMPs are defined for every pair of robots. For each sub-team, we define a goal attractor RMP for the leader, where the construction of the goal attractor RMP is explained in Appendix~\ref{app:attractor_rmp}. We also use a damper RMP defined by a GDS on the configuration space of every single robot 
so that the robots can reach a full stop at the goal. Fig.~\ref{fig:exp_cen} shows several snapshots from the experiment.  We see that the robots are able to maintain their corresponding formations while avoiding collision. The two sub-teams of robots rotates around each other to avoid potential collision, which shows that the full degrees of freedom of the task is exploited.

\begin{figure*}
    \centering
    \vspace{-7mm}
    \subfloat[$t=0\,\mathrm{s}$\label{fig:exp_cen_0}]{
		\resizebox{!}{0.8in}{\includegraphics{./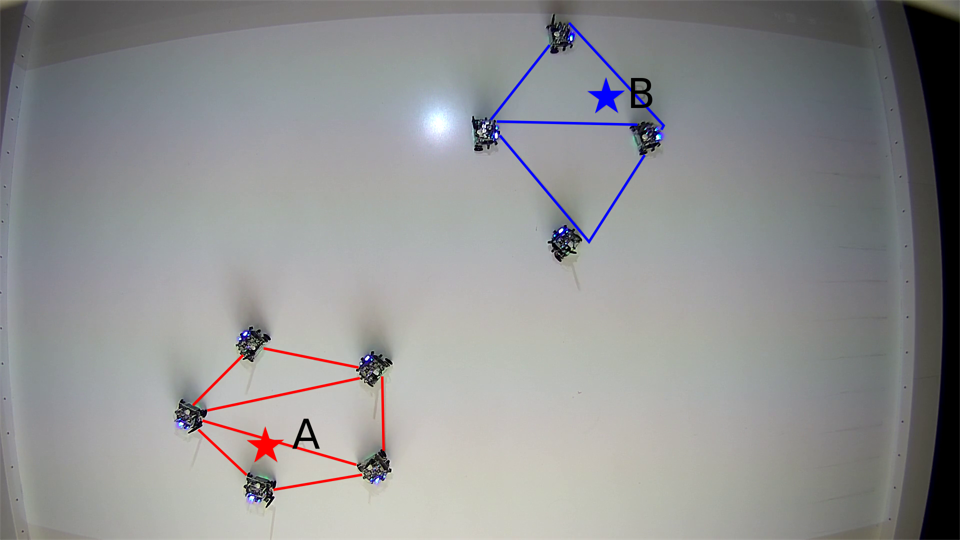}}
	}\quad
	\subfloat[$t=10\,\mathrm{s}$\label{fig:exp_cen_10}]{
    	\resizebox{!}{0.8in}{\includegraphics{./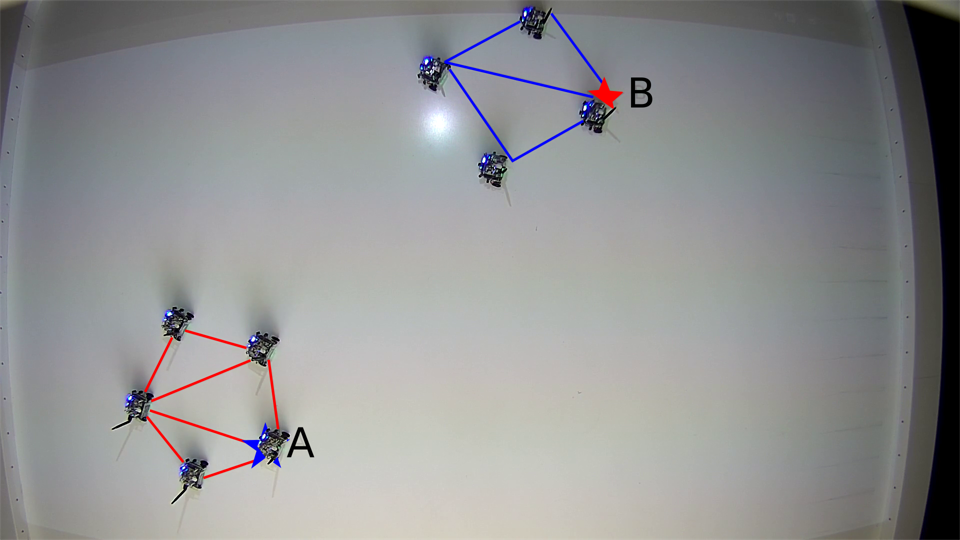}}
	}\quad
	\subfloat[$t=44\,\mathrm{s}$\label{fig:exp_cen_44}]{
		\resizebox{!}{0.8in}{\includegraphics{./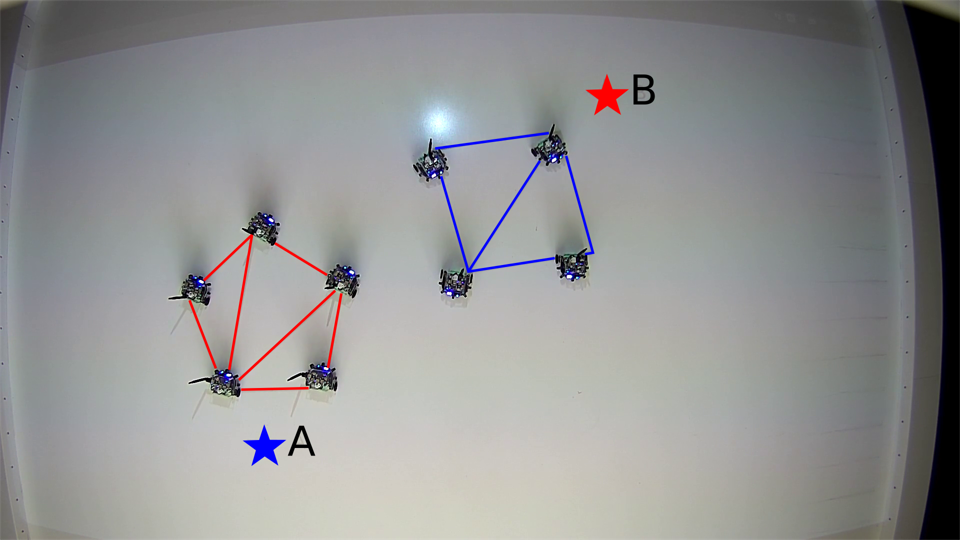}}
	}\vspace{-2mm}
	\\
	\subfloat[$t=64\,\mathrm{s}$\label{fig:exp_cen_64}]{
		\resizebox{!}{0.8in}{\includegraphics{./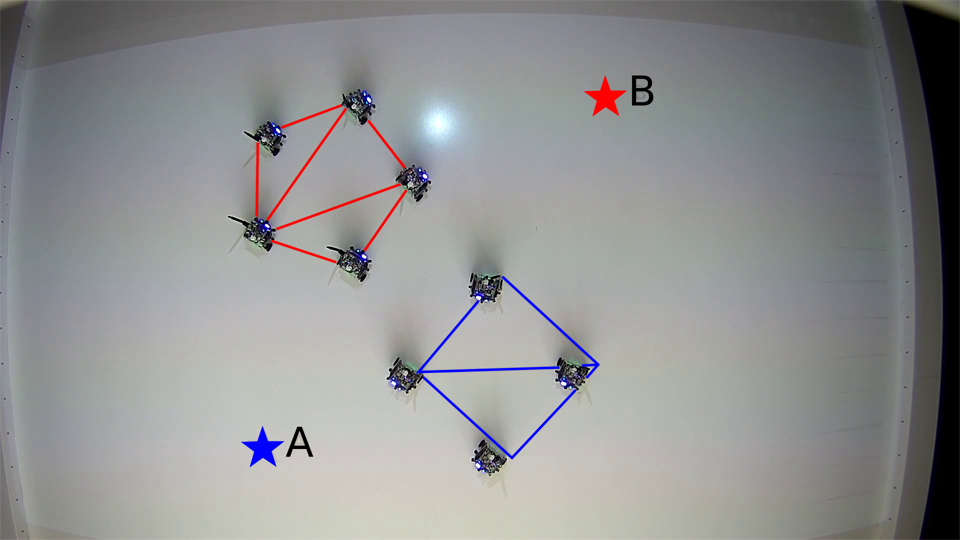}}
	}\quad
	\subfloat[$t=71\,\mathrm{s}$\label{fig:exp_cen_71}]{
    	\resizebox{!}{0.8in}{\includegraphics{./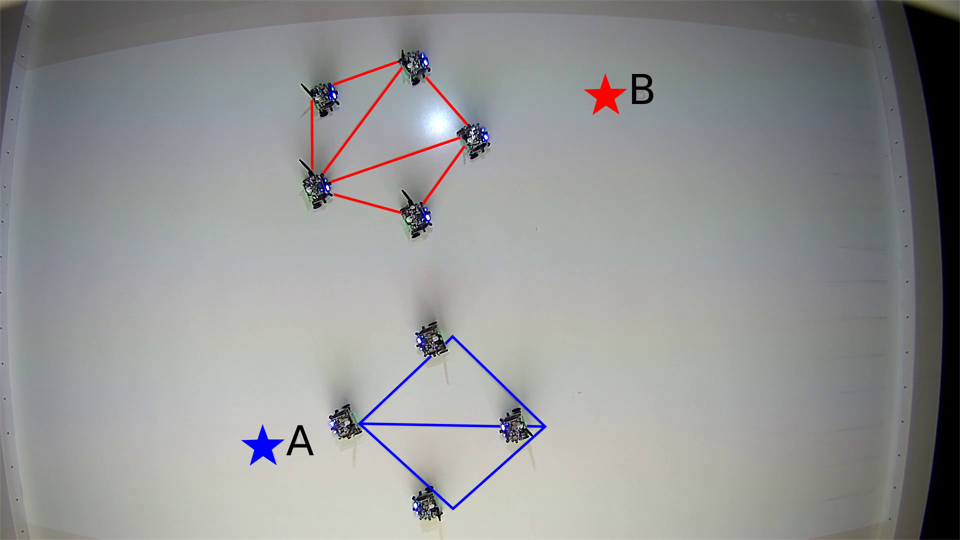}}
	}\quad
	\subfloat[$t=77\,\mathrm{s}$\label{fig:exp_cen_77}]{
		\resizebox{!}{0.8in}{\includegraphics{./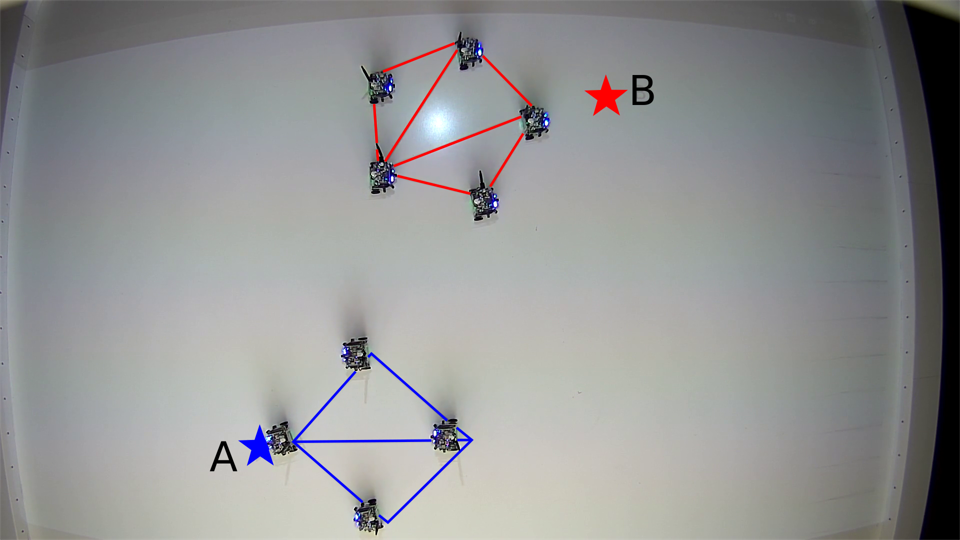}}
	}\vspace{-2mm}
    \caption{The snapshots from the formation preservation experiment with the centralized \flow framework. Goal positions and the formation graphs are projected onto the arena by an overhead projector. The colors of the graphics are augmented in the figures for the purpose of visualization. The two sub-teams of robots are tasked with maintaining the formation while moving back and force between two goal points in arena. The red and blue lines in the figure denote the formation graphs. The red and blue stars are the current goal positions for sub-team 1 and sub-team 2, respectively. \vspace{-7mm}
    }
    \label{fig:exp_cen}
\end{figure*}

\vspace{-4mm}
\subsubsection{Decentralized Partial \flow Framework}

For the decentralized partial \flow framework, we consider a team of eight robots. The robots are divided into two sub-teams. The task of the first sub-team is to achieve cyclic pursuit behavior for a circle of radius $1$ m centered at the origin. The other sub-team is designed to go through the circle surveilled by the other sub-team. To achieve the cyclic pursuit behavior, each robot in the first sub-team follows a point moving along the circle through a goal attractor RMP (defined in Appendix~\ref{app:attractor_rmp}).
The \forest for the second sub-team follows a similar structure as Fig.~\ref{fig:forest}. For each single robot, there are collision avoidance RMPs for all other robots. Snapshots from the experiment are shown in Fig.~\ref{fig:exp_dec}.  The robots from the second sub-team manage to pass through the circle under the decentralized  framework.

\begin{figure*}
    \centering
    \vspace{-7mm}
    \subfloat[$t=0\,\mathrm{s}$\label{fig:exp_dec_0}]{
		\resizebox{!}{0.8in}{\includegraphics{./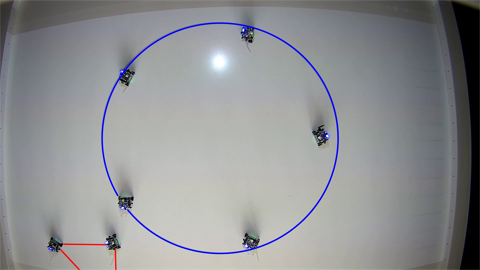}}
	}\quad
	\subfloat[$t=22\,\mathrm{s}$\label{fig:exp_dec_22}]{
    	\resizebox{!}{0.8in}{\includegraphics{./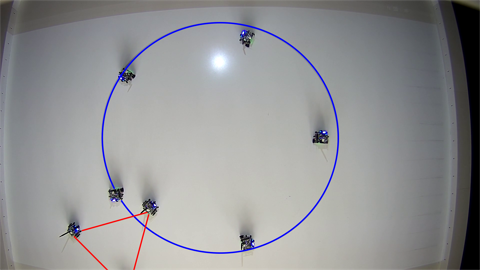}}
	}\quad
	\subfloat[$t=45\,\mathrm{s}$\label{fig:exp_dec_45}]{
		\resizebox{!}{0.8in}{\includegraphics{./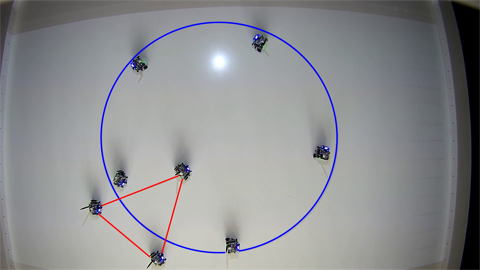}}
	}\vspace{-2mm}\\
	\subfloat[$t=95\,\mathrm{s}$\label{fig:exp_dec_95}]{
		\resizebox{!}{0.8in}{\includegraphics{./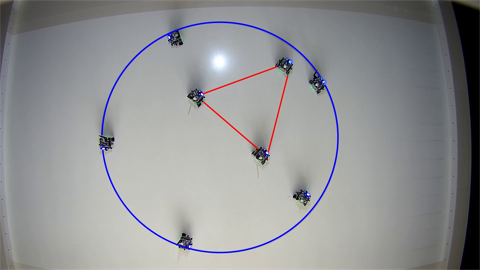}}
	}\quad
	\subfloat[$t=129\,\mathrm{s}$\label{fig:exp_dec_129}]{
    	\resizebox{!}{0.8in}{\includegraphics{./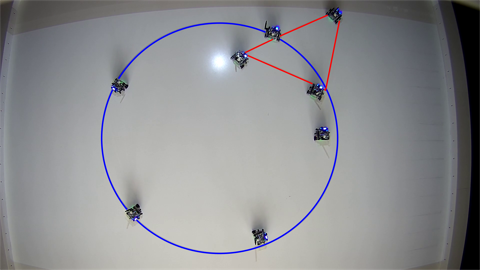}}
	}\quad
	\subfloat[$t=143\,\mathrm{s}$\label{fig:exp_dec_143}]{
		\resizebox{!}{0.8in}{\includegraphics{./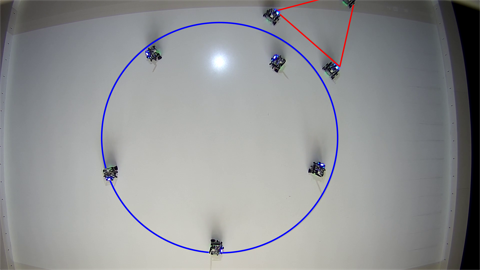}}
	} \vspace{-2mm}
    \caption{The snapshots from the experiment for the decentralized multi-robot RMP framework. The circle surveilled by the first sub-team and the formation graph for the second sub-team are projected onto the environment. Robots were divided into two sub-teams. The first sub-team of five robots performed a cyclic pursuit behavior for a circle of radius $1$ centered at the origin. The other sub-team passes through the circle surveilled by the other sub-team.\vspace{-5mm}
    }
    \label{fig:exp_dec}
\end{figure*}



\vspace{-5mm}
\section{Conclusions}
\label{sec:conclusion}
\vspace{-2mm}

In this paper, we consider multi-objective tasks for multi-robot systems. We argue that it is advantageous to define controllers for single subtasks on their corresponding manifolds. We propose centralized and decentralized algorithms to generate control policies for multi-robot systems by combining control policies defined for individual subtasks. The multi-robot system is proved to be stable under the generated control policies. We show that many existing potential-based multi-robot controllers can also be approximated by the proposed algorithms. Several subtask policies are proposed for multi-robot systems. The proposed algorithms are tested through simulation and deployment on real robots.

\begin{acknowledgment}
    This work was supported in part by the grant ARL DCIST CRA W911NF-17-2-0181.
\end{acknowledgment}

\vspace{-5mm}
\bibliographystyle{unsrt}
\bibliography{references}

\clearpage
\appendix
\section*{Appendices}
\vspace{-2mm}
\section{Proof of Theorem \ref{thm:stability_dec}}\label{app:proof_stability}
\stabilityDec*
\begin{proof}
Let $\Phi$ be the total potential function for all subtasks, i.e., $\Phi=\sum_{k\in\KK}\Phi_{\ltt_k}\circ\psi_{\ett_k}$. Consider the Lyapunov function candidate $V=\left(\sum_{i=1}^N K_i\right)+\Phi$, where $K_i=\frac{1}{2}\qd_i^\t\G_{\rtt}^i\qd_i.$
Then, following a derivation similar to~\cite{cheng2018rmpflow}, we have, \vspace{-2mm}
\begin{equation}
    \begin{split}
        \frac{d}{dt} K_i &=\frac{d}{dt}\Big(\frac{1}{2}\qd_i^\t\big(\sum_{k\in\KK_i} (\J_{\ett_k}^i)^\t \G_{\ltt_k}^i\J_{\ett_k}^i\big)\qd_i\Big)
        =\frac{d}{dt}\Big(\frac{1}{2}\sum_{k\in\KK_i} (\zd_{\ltt_k}^i)^\t \G_{\ltt_k}^i\zd_{\ltt_k}^i\Big)\\
        &=\sum_{k\in\KK_i}(\zd_{\ltt_k}^i)^\t \left(\G_{\ltt_k}^i\zdd_{\ltt_k}^i
        +\frac12\,\left(\frac{d}{dt}\G_{\ltt_k}^i\right)\zd_{\ltt_k}^i\right)\\
        &=\sum_{k\in\KK_i}(\zd_{\ltt_k}^i)^\t\Big(\M_{\ltt_k}^i\zdd_{\ltt_k}^i+\frac12\,\sdot{\Gb_{\ltt_k}^i}{\z_{k}^i}\zd_{\ltt_k}^i\Big)\\
        &=\sum_{k\in\KK_i}(\zd_{\ltt_k}^i)^\t\Big(\M_{\ltt_k}^i\J_{\ett_k}^i\qdd_i+\M_{\ltt_k}^i\Jd_{\ett_k}^i\qd_i+\frac12\,\sdot{\Gb_{\ltt_k}^i}{\z_{k}^i}\zd_{\ltt_k}^i\Big)\\
        &=\qd_i^\t\sum_{k\in\KK_i}(\J_{\ett_k}^i)^\t\M_{\ltt_k}^i\J_{\ett_k}^i\qdd_i+\sum_{k\in\KK_i}(\zd_{\ltt_k}^i)^\t\Big(\M_{\ltt_k}^i\Jd_{\ett_k}^i\qd_i+\frac12\,\sdot{\Gb_{\ltt_k}^i}{\z_{k}^i}\zd_{\ltt_k}^i\Big).
    \end{split}
\end{equation}

By definition of the \pullback operator, we have $\M_{\rtt}^i = \sum_{k\in\KK_i} (\J_{\ett_k}^i)^\t \M_{\ltt_k}^i\J_{\ett_k}^i$, \vspace{-2mm}
\begin{equation}
    \begin{split}
        \frac{d}{dt}K_i&=\qd_i^\t\M_{\rtt}^i\qdd_i+\sum_{k\in\KK_i}(\zd_{\ltt_k}^i)^\t\Big(\M_{\ltt_k}^i\Jd_{\ett_k}^i\qd_i+\frac12\,\sdot{\Gb_{\ltt_k}^i}{\z_{k}^i}\zd_{\ltt_k}^i\Big)\\
        &=\qd_i^\t\f_{\rtt}^i+\sum_{k\in\KK_i}(\zd_{\ltt_k}^i)^\t\Big(\M_{\ltt_k}^i\Jd_{\ett_k}^i\qd_i+\frac12\,\sdot{\Gb_{\ltt_k}^i}{\z_{k}^i}\zd_{\ltt_k}^i\Big).
    \end{split}
\end{equation}

Also by definition of \pullback, $\f_{\rtt}^i= \sum_{k\in\KK_i} (\J_{\ett_k}^i)^\t (\f_{\ltt_k}^i - \M_{\ltt_k}^i \dot{\J}_{\ett_k}^i \qd_i)$, hence,
\begingroup
\allowdisplaybreaks
\begin{align}
    \frac{d}{dt} K_i
    &= \sum_{k\in\KK_i}(\zd_{\ltt_k}^i)^\t\Big(\fb_{\ltt_k}^i-\M_{\ltt_k}^i\Jd_{\ett_k}^i\qd_i+\M_{\ltt_k}^i\Jd_{\ett_k}^i\qd_i+\frac12\,\sdot{\Gb_{\ltt_k}^i}{\z_{k}^i}\z_{\ltt_k}^i\Big)\nonumber\\
    &= \sum_{k\in\KK_i}(\zd_{\ltt_k}^i)^\t\Big(\fb_{\ltt_k}^i+\frac12\,\sdot{\Gb_{\ltt_k}^i}{\z_{k}^i}\z_{\ltt_k}^i\Big)\nonumber\\
    &=\sum_{k\in\KK_i}(\zd_{\ltt_k}^i)^\t\Big(- \nabla_{\z_k^i} \Phi_{\ltt_k}^i - \Bb_{\ltt_k}^i\,\zd_{k}^i - \frac12\,\sdot{\Gb_{\ltt_k}^i}{\z_{k}^i}\,\zd_{k}^i+\frac12\,\sdot{\Gb_{\ltt_k}^i}{\z_{k}^i}\z_{\ltt_k}^i\Big)\nonumber\\
    &=\sum_{k\in\KK_i}(\zd_{\ltt_k}^i)^\t\Big(- \nabla_{\z_k^i} \Phi_{\ltt_k}^i - \Bb_{\ltt_k}^i\,\zd_{k}^i\Big)\\
    &= -\Big(\sum_{k\in\KK_i}(\zd_{\ltt_k}^i)^\t\nabla_{\z_k^i} \Phi_{\ltt_k}^i\Big) - \qd_i^\t\Big(\sum_{k\in\KK_i}(\J_{\ett_k}^i)^\t\Bb_{\ltt_k}^i\,\J_{\ett_k}^i\Big)\qd_i\nonumber\\
    &= -\Big(\sum_{k\in\KK_i}(\zd_{\ltt_k}^i)^\t\nabla_{\z_k^i} \Phi_{\ltt_k}^i\Big) - \qd_i^\t\Bb_{\rtt}^i\qd_i\nonumber\\
    &= -\Big(\sum_{k\in\KK_i}(\zd_{\ltt_k}^i)^\t\nabla_{\z_k} \Phi_{\ltt_k}\Big) - \qd_i^\t\Bb_{\rtt}^i\qd_i,\nonumber
\end{align}%
\endgroup

where we denote $\z_k=\psi_{\ltt_k}(\q)$ and the last equation follows from the fact that $\z_{k}^i=\psi_{\ltt_k}=\z_k$ for all $i$.

Therefore, for the Lyapunov function candidate $V$, we have, \vspace{-1mm}
\begin{equation}
    \begin{split}
        \frac{d}{dt} V &= \sum_{i=1}^N \frac{d}{dt} K_i + \sum_{k\in\KK} \zd_{k}^\t\nabla_{\z_k}\Phi_{\ltt_k}\\
        &= -\sum_{i=1}^N\Big(\sum_{k\in\KK_i}(\zd_{k}^i)^\t\nabla_{\z_{k}}\Phi_{\ltt_k}\Big) +\sum_{k\in\KK}\zd_k^\t\nabla_{\z_{k}}\Phi_{\ltt_k}  -\sum_{i=1}^N \qd_i^\t\Bb_{\rtt}^i\qd_i\\
        &= -\sum_{k\in\KK}\Big(\sum_{i\in\II_k}(\zd_{k}^i)^\t\nabla_{\z_{k}}\Phi_{\ltt_k}\Big) +\sum_{k\in\KK}\zd_k^\t\nabla_{\z_{k}}\Phi_{\ltt_k}  -\sum_{i=1}^N \qd_i^\t\Bb_{\rtt}^i\qd_i\\
        &= -\sum_{i=1}^N\qd_i^\t\Bb_{\rtt}^i\qd_i,
    \end{split}
\end{equation}
where the last equation follows from $\zd_{k}=\sum_{i\in\II_l}\J_{\ett_k}^i\,\qd_i=\sum_{i\in\II_k}\zd_{k}^i$. Then by LaSalle's invariance principle~\cite{khalil1996noninear}, the system converges to a forward invariant set $\CC_\infty \coloneqq \{(\q,\qd) : \nabla_{\q_i} \Phi_{\rtt}^i = 0, \qd_i = 0,\forall i \in\II\}$.
\end{proof}

\vspace{-4mm}
\section{Details of the Experiments}\label{app:experiments}
\vspace{-2mm}
In this appendix, we introduce the construction of unitary goal attractor RMP, which is used in many of the experiments, and provide the choice of parameters for the simulation and experiments.

\vspace{-2mm}
\subsection{Unitary Goal Attractor RMP}\label{app:attractor_rmp}
In multi-robot scenarios, instead of planning paths for \emph{every} robot,
it is common to plan a path or assign a goal to one robot, called the \emph{leader}. The other robots may simply follow the leader or maintain a given formation depending on other subtasks assigned to the team. 
In this case, a goal attractor RMP may be assigned to the leader. A number of controllers for multi-robot systems are also based on going to a 
goal position, such as the cyclic pursuit behavior \cite{cortes2017coordinated} and Voronoi-based coverage controls \cite{cortes2017coordinated,cortes2004coverage}.

There are several design options for goal attractor RMPs. We will discuss two examples. The first goal attractor RMP is introduced in \cite{cheng2018rmpflow}. The attractor RMP for robot $i$ is defined on the subtask space $\z = \x_i - \g_i$, where $\g_i$ is the desired configuration for the robot. The metric is designed as $\G(\z) = w(\z)\,\I$. The weight function $w(\z)$ is defined as $w(\z) = \gamma(\z)\,w_u + (1-\gamma(\z))\,w_l$, with $0\leq w_l\leq w_u<\infty$ and $\gamma(\z) = \exp(-\frac{\|\z\|^2}{2\sigma^2})$ for some $\sigma>0$. The weights $w_l$ and $w_u$ control the importance of the RMP when the robots are far from the goal and close to the goal, respectively.  As the robot approaches the goal, the weight $w(\z)$ will smoothly increase from $w_l$ to $w_u$. The parameter $\sigma$ determines the characteristic length of the metric. The main intuition for the metric is that when the robot is far from the goal, the attractor should be permissive enough for other subtasks such as collision avoidance, distance preservation, etc. However, when the robot is close to the goal, the attractor should have high importance so that the robot can reach the goal. \mbox{The potential function is designed such that, }\vspace{-1mm}
\begin{equation} \label{eqn:sNumerical}
\nabla_\z\Phi(\z) = \beta\,w(\z)\left(
\frac{1 - e^{-2\alpha\|\z\|}}{1 + e^{-2\alpha\|\z\|}}
\right) \hat{\z}
= \beta\,w(\z)\,s_\alpha\big(\|\z\|\big)\,\hat{\z},
\end{equation}
where $\beta>0$, $s_\alpha(0) = 0$ and $s_\alpha(r)\rightarrow 1$ as $r\rightarrow\infty$. The parameter
$\alpha$ determines the characteristic length of the potential. The potential function defined in (\ref{eqn:sNumerical}) provides a \emph{soft-normalization} for $x$ so that the transition near the origin is smooth. The damping matrix is $\Bb(\z)=\eta\,w(\z)\,\I$, where $\eta>0$ is a positive scalar.
We will refer to this goal attractor RMP as \emph{Goal Attractor RMPa} in subsequent sections. Although more complicated, it produces better results when combined with other RMPs, especially \mbox{collision avoidance RMPs (see Appendix \ref{app:simulation}).}

Another possible goal attractor RMP is based on a PD controller. This RMP is also defined on the subtask space $\z=\psi(\x_i)=\x_i-\g_i$. The metric is a constant times identity matrix, $\G = c\,\I$ with some $c>0$. The potential function is defined as $\Phi(\z)=\frac{1}{2} \alpha\|\z\|^2$ and the damping is $\B\equiv\eta\,\I$, with $\alpha, \eta >0$. This RMP is equivalent to a PD controller with $k_p=\alpha/c$ and $k_d=\eta/c$. This goal attractor will be referred to as \emph{Goal Attractor RMPb} in subsequent sections.

\vspace{-2mm}
\subsection{Choice of Parameters}
\subsubsection{Simulated Formation Preservation Task}
For each robot, we define a goal attractor RMPa, with parameters $w_u=10$, $w_l=1$, $\sigma=0.1$, $\beta=0.1$,  $\alpha=10$, $\eta=1$. We use a damper RMP with $\G\equiv0.01\,\I$ $\B\equiv \I$, $\Phi\equiv 0$. For the distance preserving RMPa we set parameters $\G=c=1$ and $\eta=2$. For distance preservation RMPb (which is shown to be equivalent to the potential-based controller in the previous simulation), we choose parameters $\G=\I$ and $\eta=2$.

\vspace{-2mm}
\subsubsection{Centralized \flow Framework} We use distance preservation RMPa's with $\G=c=10$, and $\eta=5$. The safety distance between robots is $d_S=0.18$. The parameters for the collision avoidance RMPs are set as $\alpha=1{\mathrm{e}}-5$, $\epsilon=1{\mathrm{e}}-8$, and $\eta=0.5$. For goal attractors, we use goal attractor RMPa's with $w_u = 10, w_l = 0.01, \sigma = 0.1, \beta = 1, \alpha = 1$, and $\eta = 1$. The damping RMPs have parameters $\G\equiv0.01\,\I$ $\B\equiv \I$, $\Phi\equiv 0$.

\vspace{-2mm}
\subsubsection{Decentralized Partial \flow Framework}
For the cyclic pursuit tasks, the robots are attracted by points moving along the circle with angular velocity $0.06\,\mathrm{rad/s}$. The parameters for the associated goal attractor RMPa's are $w_u=10$, $w_l=0.01$, $\sigma=0.1$, $\beta=1$,  $\alpha=1$, $\eta=1$. For robots from sub-team 2, the distance preservation RMPa's have parameters $\G=c=10$, $\eta=2$. The goal attractor are goal attractor RMPa's with parameters $w_u=10$, $w_l=1$, $\sigma=0.1$, $\beta=1$,  $\alpha=10$, $\eta=2$. The parameters for the collision avoidance RMPs are $\alpha=1{\mathrm{e}}-5$, $\epsilon=1{\mathrm{e}}-8$, and $\eta=1$, with safety distance $d_S=0.18$.

\vspace{-2mm}
\section{Additional Simulation Results}\label{app:simulation}
\vspace{-2mm}

\subsubsection{RMPs \& Potential-based Controllers}
As is discussed in Section \ref{sec:centralized}, many potential-based multi-robot controllers can be reconstructed by the RMP framework up to degree normalization. In this example, we consider a formation control task with five robots. The robots are tasked with forming a regular pentagon with circumcircle radius $0.4$. The robots are initialized with a regular pentagon formation, but with a larger circumcircle radius of $1$.
\begin{figure}
    \centering
    \vspace{-4mm}
    \subfloat[Potential-based\label{fig:rmp_pf_pf}]{
		\resizebox{!}{1.2in}{\includegraphics{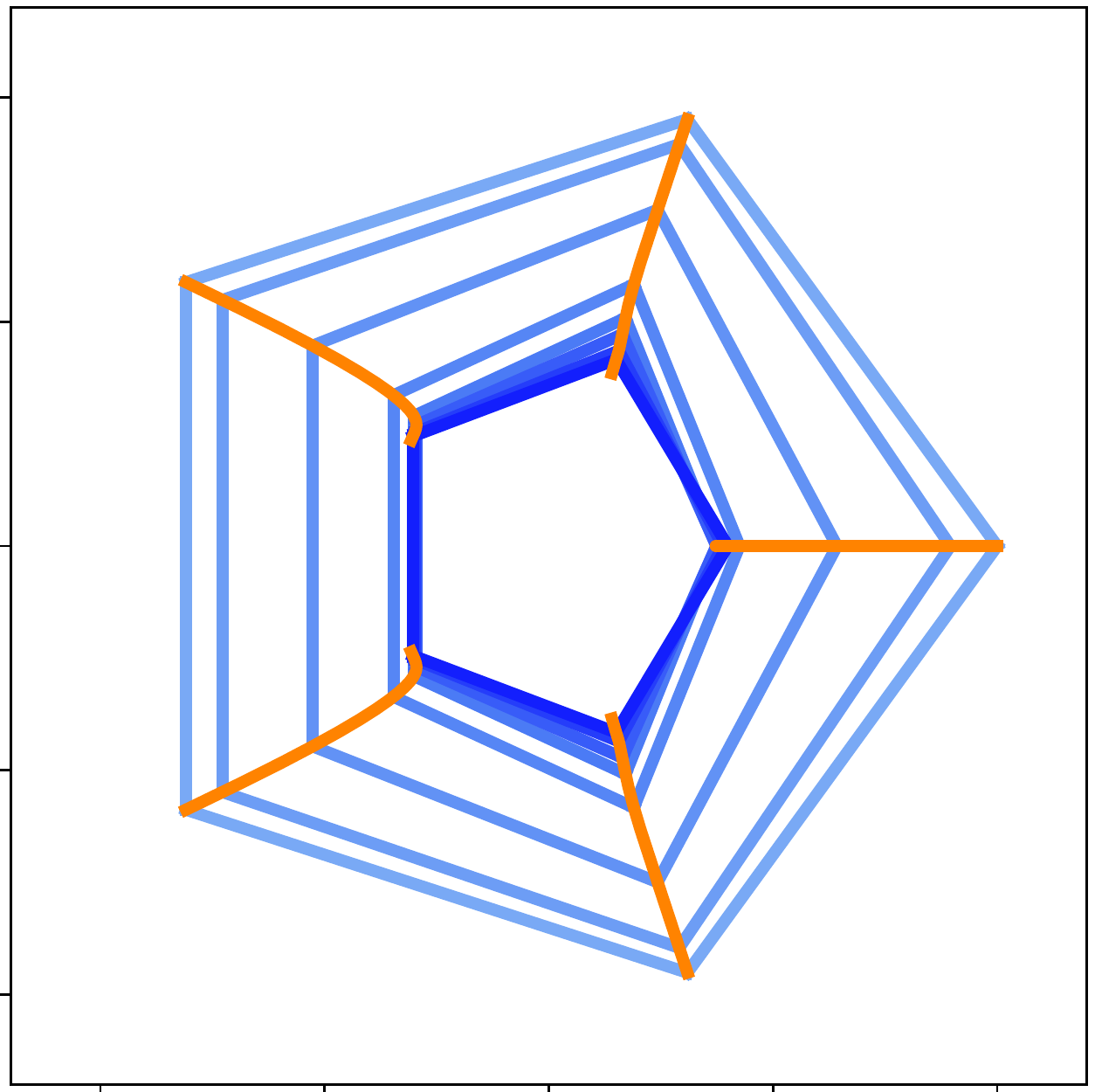}}
	}
	\subfloat[Dist. Presv. RMPb\label{fig:rmp_pf_rmp1}]{
    	\resizebox{!}{1.2in}{\includegraphics{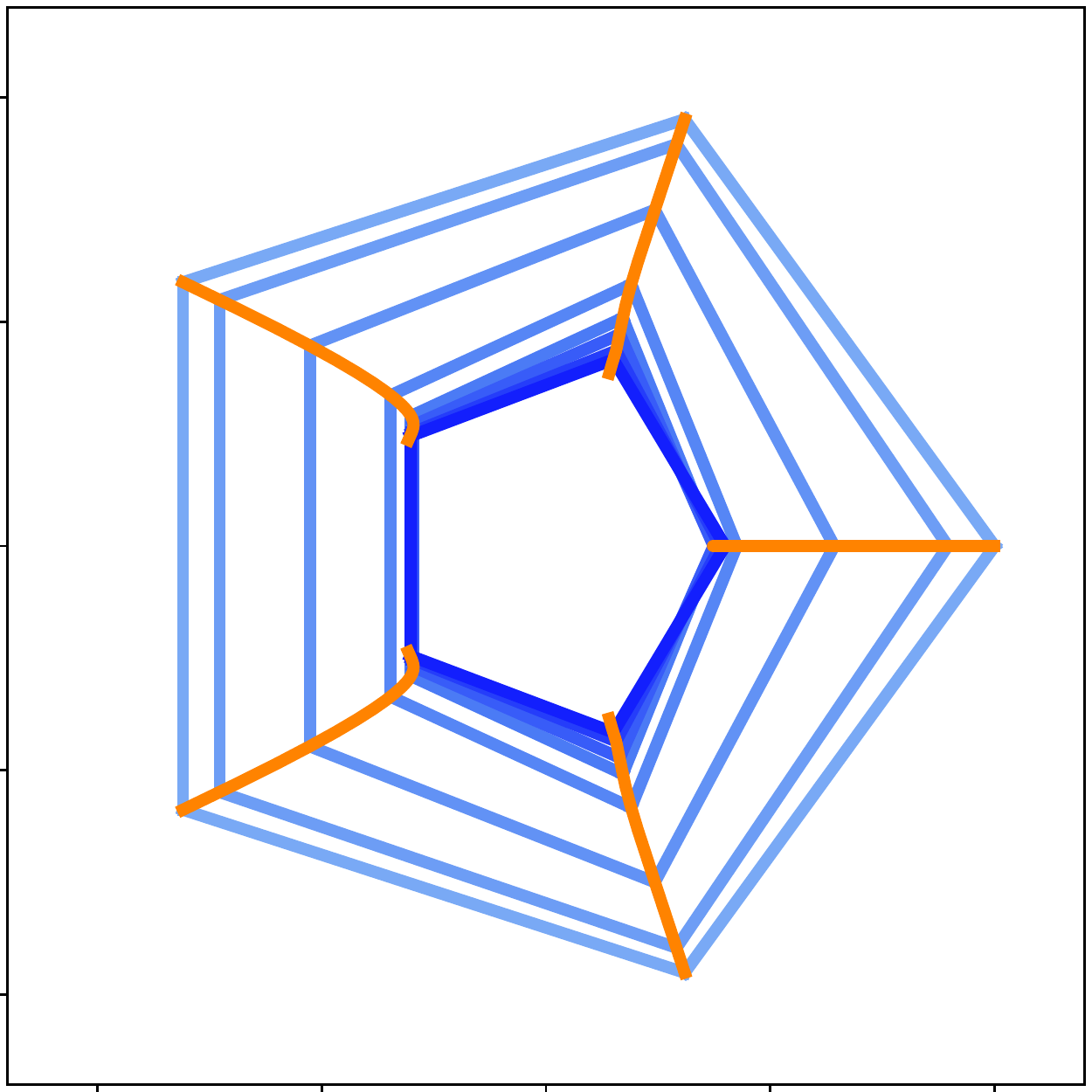}}
	}
	\subfloat{
		\resizebox{!}{1.2in}{\includegraphics{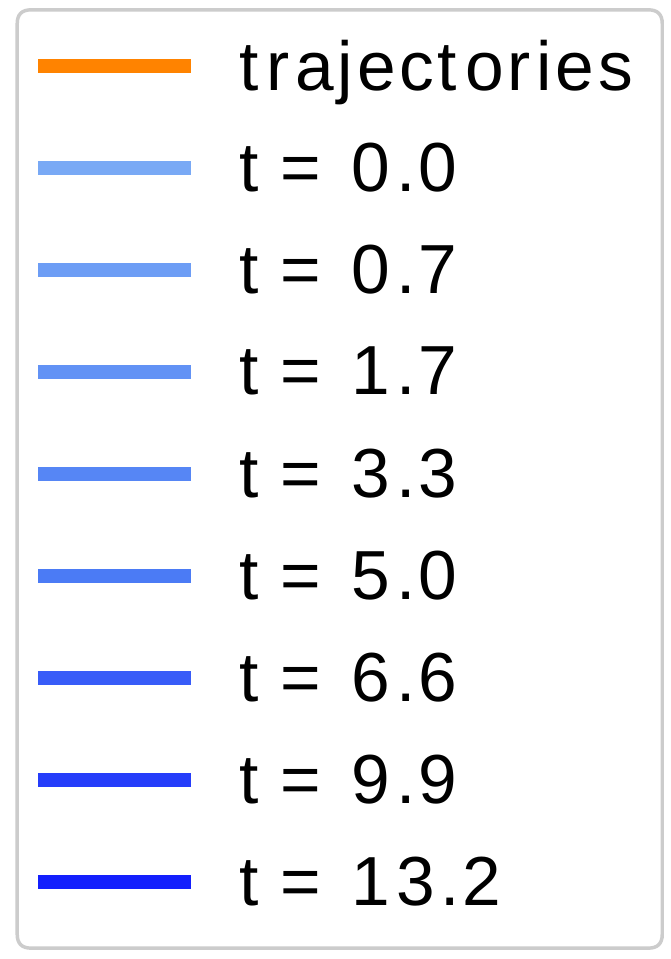}}
	}
	\caption{(a) The potential-based controller introduced in~\cite{mesbahi2010graph}. The blue pentagons from light to dark denotes the shape from $t=0.0s$ to $t=13.2s$. The orange curves represent the trajectories of the robots. (b) The controller generated from the centralized RMP framework with Distance Preservation RMPa. The fact that (a) and (b) are identical shows that the two controllers have the same behavior. \vspace{-8mm}
	}
    \label{fig:rmp_pf}
\end{figure}

We consider a degree-normalized potential field controller from~\cite{mesbahi2010graph}, \vspace{-1mm}
\begin{equation}\label{eq:formation_pf}
    \begin{split}
        \ub_i &= -\frac{1}{D_i} \sum_{j:(i,j)\in E}\Big(\nabla_{\x_i}\big\{\frac{1}{2}(\|\x_i-\x_j\|-d_{ij})^2\big\} - \eta\,\xd_i\Big)\\
        &= -\frac{1}{D_i} \sum_{j:(i,j)\in E}\Big(\frac{\|\x_i-\x_j\|-d_{ij}}{\|\x_i-\x_j\|}(\x_i-\x_j) - \eta\,\xd_i\Big),
    \end{split}
\end{equation}
where $d_{ij}$ is the desired distance between robot $i$ and robot $j$, $E$ is the set of edges in the formation graph, and $D_i$ is the degree of robot $i$ in the formation graph. For the RMP implementation, we use the controller given by (\ref{eq:formation_b}).
The potential-based controller (\ref{eq:formation_pf}) is equivalent to the controller generated by the distance preservation RMP given by (\ref{eq:formation_b}) when choosing $c=\alpha=1$. In simulation, we choose $\eta=2$ for both the RMP controller and the potential-based controller.
The trajectories of the robots under the two controllers are displayed in Fig.~\ref{fig:rmp_pf_pf} and Fig.~\ref{fig:rmp_pf_rmp1}, respectively. The results are \emph{identical} implying  the controllers have exactly the same behavior.

\vspace{-2mm}
\subsubsection{Goal Attractor RMPs \& Collision Avoidance RMPs}
An advantage of the multi-robot RMP framework is that it can leverage  existing single-robot controllers, which may have desirable properties, especially when combined with other controllers. In this simulation task, the performance of the two goal attractor RMPs are compared when combined with pairwise collision avoidance RMPs. In the simulation, three robots are tasked with reaching a goal on the other side of the field while avoiding collisions with each of the others. The parameters for the collision avoidance RMPs are set as $\alpha=\epsilon=1{\mathrm{e}}-5$, and $\eta=0.2$.
Fig.~\ref{fig:g2g_rmp1} and Fig.~\ref{fig:g2g_rmp2} show the behavior of the resulting controllers with the two choices goal attractor RMPs discussed in Section~\ref{app:attractor_rmp}, respectively. For goal attractor a, we use parameters $w_u = 10, w_l = 0.01, \sigma = 0.1, \alpha = 1$, and $\eta = 1$. we use $c=1$, $\alpha=1$, and $\eta=2$ for goal attractor RMPb.
We notice that goal attractor RMPa generates smoother trajectories compared to goal attractor RMPb. 

\vspace{-2mm}
\subsubsection{The Centralized \& Decentralized RMP Algorithms}
The centralized and the decentralized RMP algorithms are also compared through the same simulation of three robots reaching goals. Goal attractor RMPa's with the same parameters were used. For the collision avoidance RMP, we set $\alpha=\epsilon=1{\mathrm{e}}-5$, and $\eta=0.2$.
The trajectories of the robots under the decentralized algorithm are illustrated in Fig.~\ref{fig:g2g_rmp3}. Compared to trajectories generated from centralized RMPs, the robots oscillate slightly when approaching other robots, and made aggressive turns to avoid collisions.

\begin{figure}
    \centering
    \vspace{-6mm}
    \subfloat[Centralized/RMPa\label{fig:g2g_rmp1}]{
		\resizebox{!}{1.3in}{\includegraphics{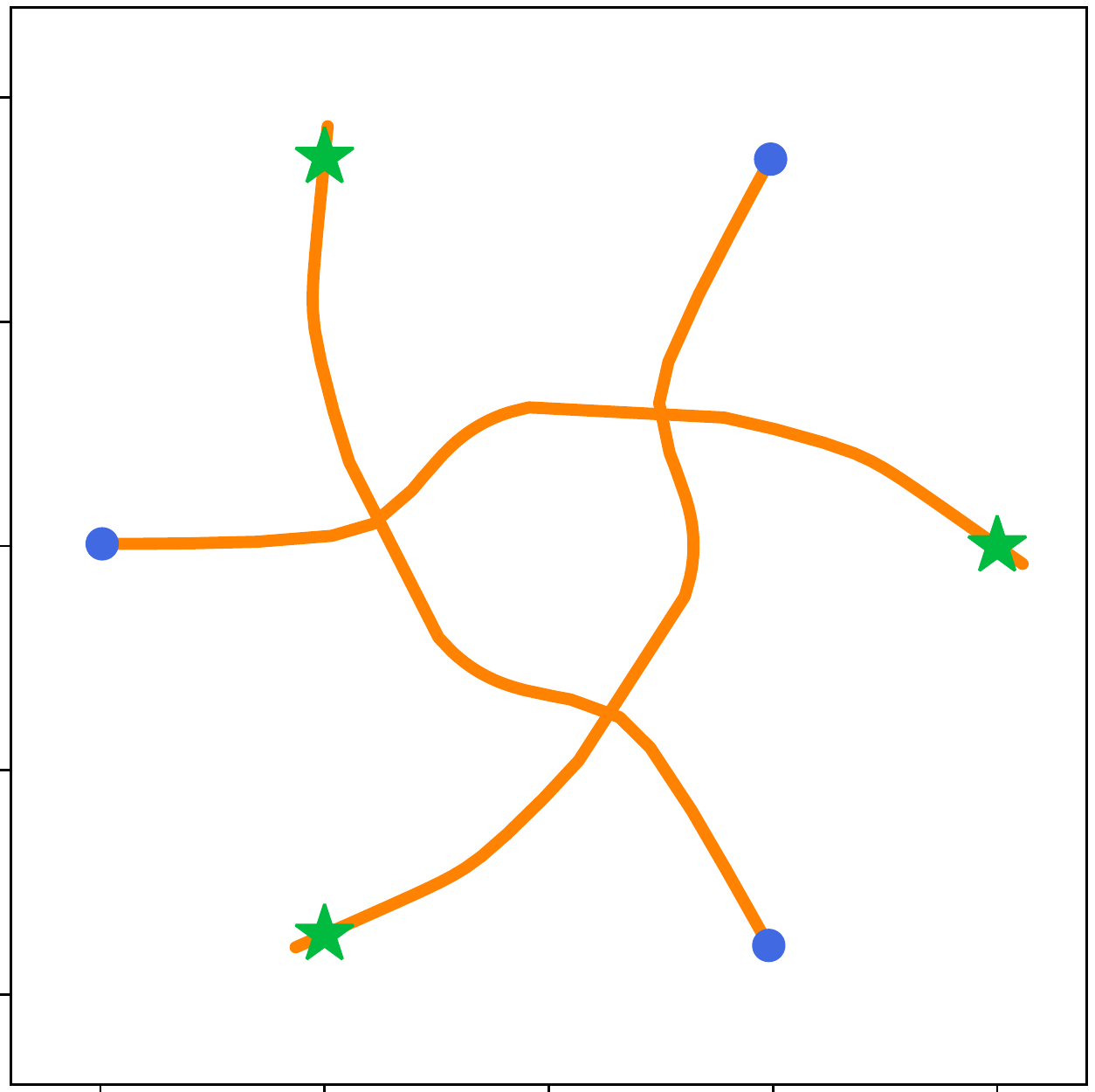}}
	}
	\,
	\subfloat[Centralized/RMPb\label{fig:g2g_rmp2}]{
    	\resizebox{!}{1.3in}{\includegraphics{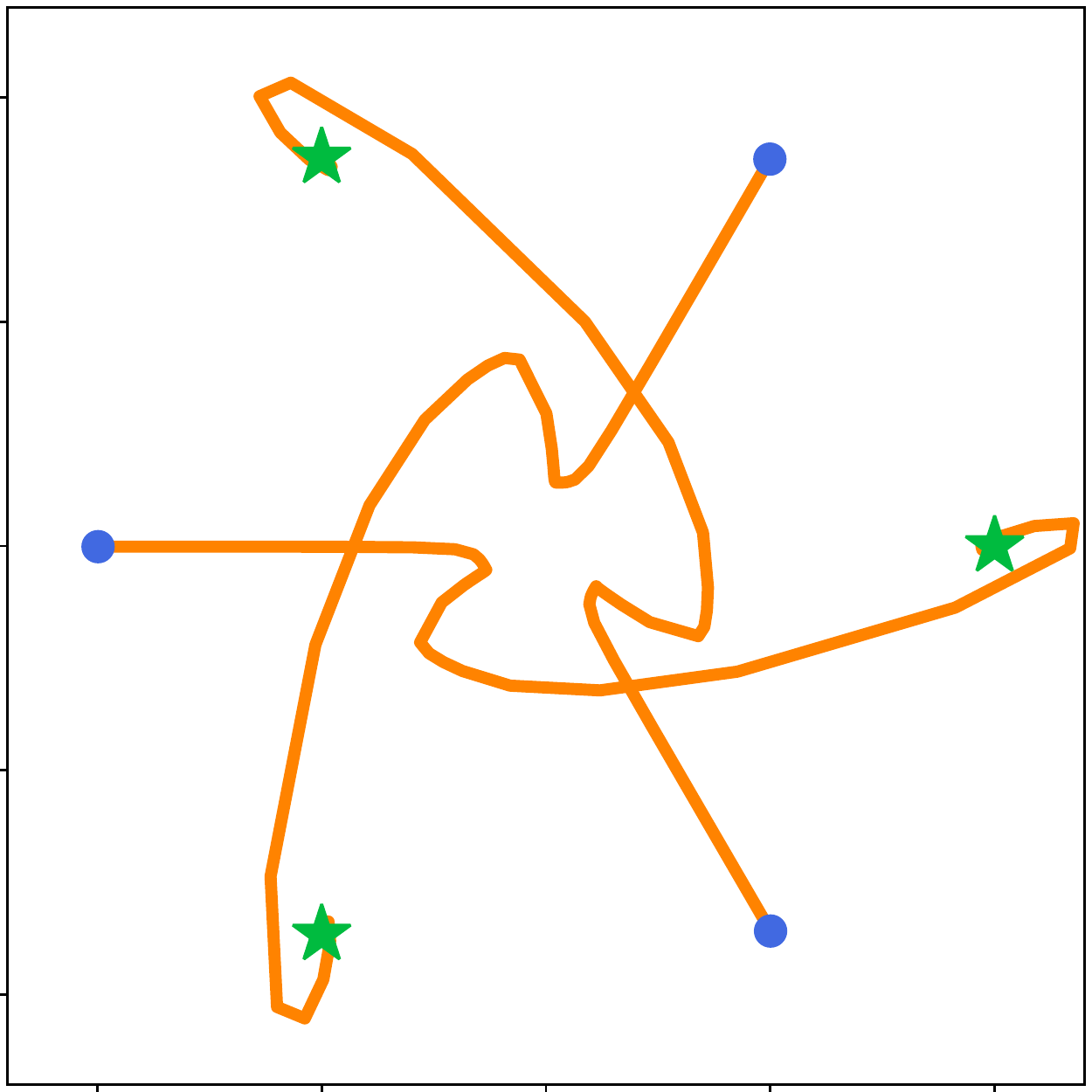}}
	}
	\,
	\subfloat[Decentralized/RMPa\label{fig:g2g_rmp3}]{
    	\resizebox{!}{1.3in}{\includegraphics{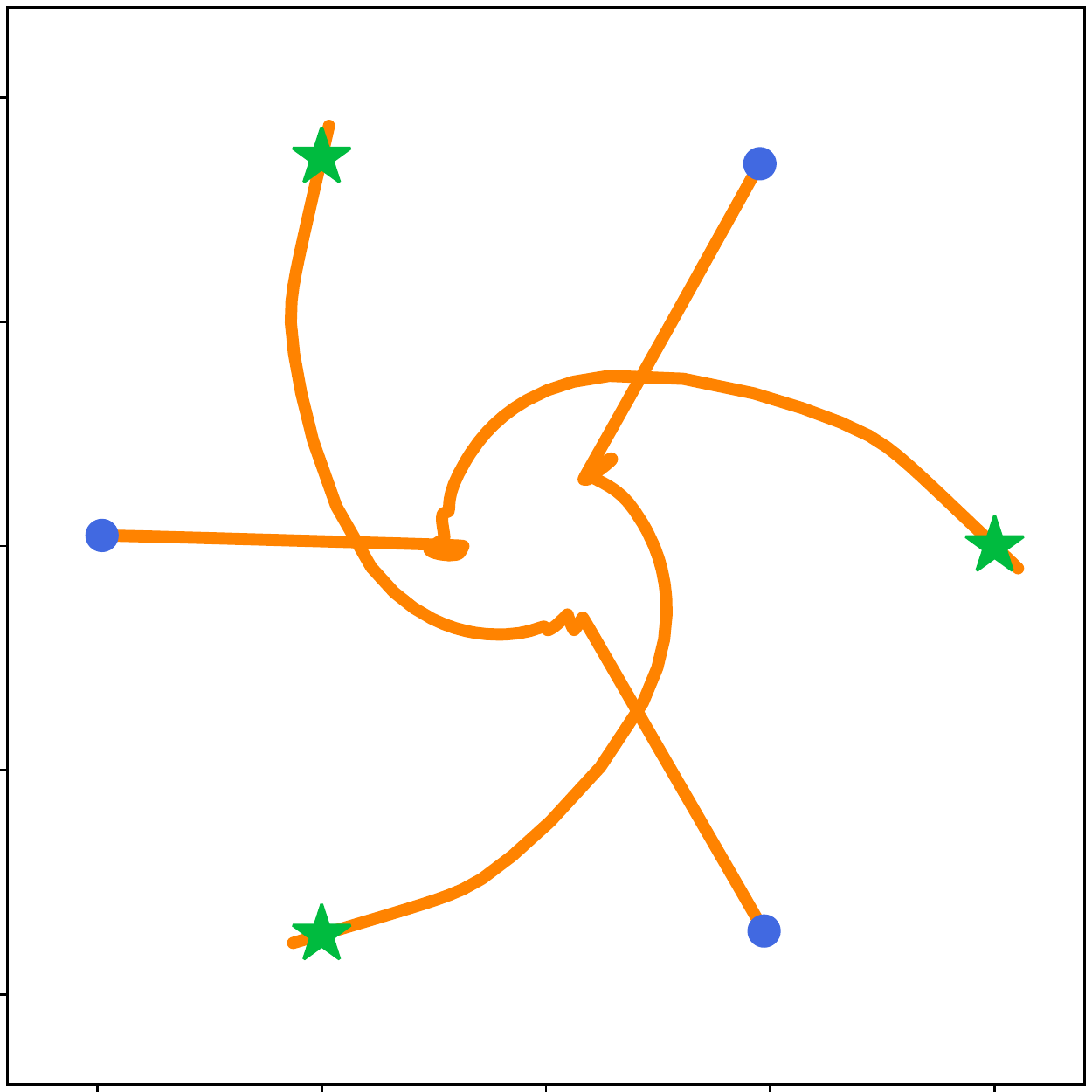}}
	}
    \caption{The performance of goal attractor RMPs combined with pairwise collision avoidance RMPs. The blue dots and green stars denote the initial and goal positions, respectively. The trajectories are represented by orange curves. (a) The more sophisticated goal attractor RMPa generates smooth trajectories when combined with collision avoidance RMP in the centralized framework. (b) Goal attractor RMPb, which is equivalent to a PD controller, generates more distorted trajectories. (c) Under the decentralized RMP framework, the robots oscillate slightly near the origin and turn abruptly against each other. }\vspace{-6mm}
    \label{fig:g2g}
\end{figure}

\section{On Heterogeneous Robotic Teams}\label{app:heterogeneity}
\vspace{-2mm}

A significant feature of RMPs is that they are intrinsically coordinate-free~\cite{cheng2018rmpflow}. Consider two robots $i$ and $j$ with configuration space $\CC_i$ and $\CC_j$, respectively. Assume that there exists a smooth map $\psi$ from $\CC_j$ to $\CC_i$. Then the \tree designed for one robot $i$ can be directly transferred to robot $j$ by connecting the tree to the root node of robot $j$ through the map $\psi$. Therefore, RMPs provides a level of abstraction for heterogeneous robotic teams so that the user only needs to design desired behaviors for a homogeneous team with simple dynamics models, for example, double integrator dynamics, and seamlessly transfer it to the heterogeneous team. This insight could bridge the gap between theoretical results, which are usually derived for homogeneous robotic teams with simple dynamics models, and real robotics applications.

\end{document}